\documentclass[]{article}

\usepackage[utf8]{inputenc}
\usepackage[T1]{fontenc}
\usepackage[hyphens]{url}
\usepackage{authblk}
\makeatletter
\renewcommand\AB@affilsepx{\quad\protect\Affilfont}
\makeatother
\renewcommand\Affilfont{\fontsize{10}{13}\selectfont}
\usepackage{booktabs}
\usepackage{amsfonts}
\usepackage{pifont}
\usepackage{nicefrac}
\usepackage[nopatch=footnote]{microtype}
\usepackage{amsmath, amsthm, amssymb}
\usepackage[
    colorlinks=true,
    linkcolor={red},
    citecolor={blue!50!black},
    urlcolor={blue!70!black}
]{hyperref}
\usepackage[most]{tcolorbox}
\usepackage{fancybox}
\usepackage{graphicx}
\usepackage{listings} 
\usepackage{float}
\usepackage{algorithm}
\usepackage[noend]{algpseudocode}
\usepackage{color}
\usepackage{tikz}
\usepackage{pgfplots}
\pgfplotsset{compat=1.18}
\usepackage{array}
\usepackage{caption}
\usepackage{subcaption}
\usepackage[textsize=scriptsize]{todonotes}
\usepackage{enumitem}
\usepackage{wrapfig}
\usepackage{etoolbox}
\usepackage{diagbox}
\usepackage{comment}
\usepackage{makecell}
\usepackage{multirow}
\usepackage{bbm, bm}
\usepackage{multicol}
\usepackage{mathpazo}
\usepackage{array, booktabs}
\usepackage{arydshln}
\usepackage{siunitx}
\usepackage{times}
\usepackage{xfrac}
\usepackage{setspace}
\usepackage[normalem]{ulem}
\usepackage[round]{natbib}
\usepackage{fontawesome5}
\usepackage{cleveref}
\usepackage{aliascnt}
\usepackage{xcolor}
\newaliascnt{appendixsection}{section}
\crefname{appendixsection}{appendix}{appendices}
\Crefname{appendixsection}{Appendix}{Appendices}
\usepackage[
  textwidth=6.0in,
  textheight=9.0in,
  left=1.25in,
  top=1in
]{geometry}

\setlist[itemize]{leftmargin=2.5em}
\setlist[itemize]{leftmargin=2.5em}
\setlist[enumerate]{leftmargin=2.5em}
\usepackage[framemethod=TikZ]{mdframed}
\mdfsetup{skipabove=5pt,
innertopmargin=0pt}

\tcbuselibrary{breakable}
\lstdefinestyle{modeloutput}{
  basicstyle=\ttfamily\tiny,
  breaklines=true,
  breakatwhitespace=false,
  breakindent=0pt,
  columns=fullflexible,
  keepspaces=true,
  literate={≈}{{$\approx$}}1
}

\newtcolorbox{responsebox}[1]{
  breakable,
  colback=gray!3,
  colframe=gray!40,
  title={#1},
  fonttitle=\bfseries,
  boxrule=0.4pt,
  arc=1mm,
  left=0.5mm,
  right=0.5mm,
  top=0.5mm,
  bottom=0.5mm,
  before skip=0.3em,
  after skip=0.3em
}

\newtcolorbox{promptbox}[1][]{
  colback=gray!5,
  colframe=gray!55,
  boxrule=0.5pt,
  arc=2mm,
  left=6pt,
  right=6pt,
  top=6pt,
  bottom=6pt,
  fonttitle=\bfseries,
  title=#1
}

\newtheoremstyle{slplain}
  {.4\baselineskip\@plus.1\baselineskip\@minus.1\baselineskip}
  {.3\baselineskip\@plus.1\baselineskip\@minus.1\baselineskip}
  {\itshape}
  {}
  {\bfseries}
  {.}
  { }
  {}
\theoremstyle{slplain} 

\newtheorem*{definition*}{Definition}
\newtheorem*{theorem*}{Theorem}

\newtheorem{theorem}{Theorem}[section]
\newaliascnt{lemma}{theorem}
\newtheorem{lemma}[lemma]{Lemma}
\aliascntresetthe{lemma}

\newaliascnt{proposition}{theorem}

\aliascntresetthe{proposition}

\newaliascnt{corollary}{theorem}
\newtheorem{corollary}[corollary]{Corollary}
\aliascntresetthe{corollary}

\newaliascnt{observation}{theorem}

\aliascntresetthe{observation}

\newaliascnt{claim}{theorem}

\aliascntresetthe{claim}

\newaliascnt{definition}{theorem}

\aliascntresetthe{definition}

\newaliascnt{fact}{theorem}

\aliascntresetthe{fact}

\crefname{lemma}{lemma}{lemmas}
\Crefname{lemma}{Lemma}{Lemmas}
\crefname{proposition}{proposition}{propositions}
\Crefname{proposition}{Proposition}{Propositions}
\crefname{corollary}{corollary}{corollaries}
\Crefname{corollary}{Corollary}{Corollaries}

\crefalias{lemma}{lemma}
\crefalias{proposition}{proposition}
\crefalias{corollary}{corollary}

\makeatletter
\newtheorem*{rep@theorem}{\rep@title}
\newcommand{\newreptheorem}[2]{%
\newenvironment{rep#1}[1]{%
 \def\rep@title{#2 \ref{##1}}%
 \begin{rep@theorem}}%
 {\end{rep@theorem}}}
\makeatother

\newreptheorem{theorem}{Theorem}
\newreptheorem{lemma}{Lemma}
\newreptheorem{claim}{Claim}
\newreptheorem{corollary}{Corollary}
\newreptheorem{proposition}{Proposition}

\theoremstyle{definition}

\theoremstyle{plain} 

\numberwithin{equation}{section}

\DeclareMathOperator{\E}{\mathbb{E}}

\DeclareMathOperator{\KL}{\mathrm{KL}}

\renewcommand\bar\overline

\newcolumntype{C}[1]{>{\centering\let\newline\\\arraybackslash\hspace{0pt}}m{#1}}

\newcommand{\calB}{\ensuremath{\mathcal{B}}}

\newcommand{\calL}{\ensuremath{\mathcal{L}}}

\newcommand{\calR}{\ensuremath{\mathcal{R}}}

\newcommand{\calV}{\ensuremath{\mathcal{V}}}

\def\nd/{\textsuperscript{nd}}
\def\rd/{\textsuperscript{rd}}
\def\th/{\textsuperscript{th}}

\makeatletter
\def\nnil{\nil}
\newcounter{prob}

\newcounter{dual}

\makeatother

\newenvironment{prob*}{%
	\csname equation*\endcsname%
	\aligned%
}{%
	\endaligned%
	\csname endequation*\endcsname%
}

\title{When to Follow the Teacher:\\Advantage-Based On-Policy Distillation}
\author[1]{Mahdi Sabbaghi\thanks{Correspondence can be made to: smahdi@seas.upenn.edu}}
\author[1]{George Pappas}
\author[1]{Hamed Hassani}
\author[2]{Adel Javanmard}

\affil[1]{University of Pennsylvania}
\affil[2]{University of Southern California}
\date{}

\begin{document}
\maketitle

\begin{abstract}
Online Policy Distillation (OPD) often outperforms offline methods such as Supervised Fine-Tuning (SFT) by training on trajectories sampled directly from the student. However, standard OPD trains the student to follow the teacher at every state of the trajectory, assuming that teacher guidance is always beneficial. We instead distinguish between what the teacher prefers and what actually helps the current student succeed. We introduce Advantage-Based On-Policy Distillation (AOPD), which weights teacher actions by their counterfactual advantage under the student. Teacher actions that improve the student’s expected outcome are reinforced, while those that do not are downweighted.

Directly computing these advantages requires trying alternative teacher actions at each step and running new rollouts from each. Instead, we show that the AOPD objective can be estimated using only standard student rollouts and their terminal rewards. Our estimator uses teacher importance weights on the student-sampled actions to provide unbiased advantage-weighted supervision, without any additional rollouts or reward evaluations. This allows AOPD to focus teacher supervision on actions that help the current student while preserving the rollout efficiency of standard OPD. Experiments on math, coding, and Medqa show gains over distillation and RL baselines in both external-teacher and self-distillation settings. 

\begin{center}
    \faGithub~\url{https://github.com/Helloworld10011/AOPD}
\end{center}
\end{abstract}

\section{Introduction}
Knowledge distillation transfers capabilities from a stronger teacher to a smaller student model by training the student to imitate the teacher's outputs.
Offline distillation is a common approach that uses fixed teacher-generated trajectories. But this approach can lead to a mismatch between the training trajectories and the states the student generates itself. 
On-policy distillation (OPD) addresses this mismatch by querying the teacher along student-sampled trajectories, providing dense token-level supervision at the states the student actually visits \citep{agarwal2024policy,lu2025onpolicydistillation,song2026survey,ross2011reduction,li2026rethinking}.
Although the trajectories now come from the student, the objective remains to match the teacher, without distinguishing which teacher actions improve the student's chance of success \citep{liu2026teacher}.

This limitation also applies to on-policy self-distillation (OPSD), which replaces the external teacher with a privileged version of the student itself, such as conditioned on ground-truth solutions \citep{zhao2026self,hubotter2026reinforcement}.
Whether external or privileged, the teacher may choose an action from which it can solve the problem, while the student cannot complete the remaining steps correctly.
Conversely, distillation discourages a valid student solution when it differs from the teacher's preferred approach.
Thus the question is whether the teacher action at every state increases the student's chance of success.
An action should be credited according to its impact on the student's task success, not simply the teacher's preference \citep{chen2026look,saadi2026validity}.

Prior attempts to mitigate this optimization misalignment rely on making the teacher supervision more selective. A common approach combines outcome-based reinforcement learning with distillation, but adding a reward term does not change which teacher actions the distillation term reinforces \citep{xu2025kdrl,zhang2026reinforcement}.
Other methods downweight distillation when the teacher is uncertain, using entropy as a proxy, or when teacher tokens are implausible under the student \citep{jin2026entropy,wang2026not}.
Alternative approaches validate teacher actions using auxiliary reward models \citep{saadi2026validity,gu2026rethinking} or explicit intervention rollouts \citep{chen2026look}.
These methods provide ways to identify useful teacher guidance, but rely on auxiliary signals or additional counterfactual trajectories. More importantly, they do not formulate the objective directly around the impact of the teacher action on the student's expected task reward. Therefore, we ask:
\begin{quote}
    \textit{How should a distillation objective weight teacher supervision to improve the student’s eventual task success?}
\end{quote}

\noindent We propose \textbf{Advantage-based On-Policy Distillation} (AOPD), the first formulation of on-policy distillation that directly weights teacher actions by their advantage under the current student.
Intuitively, the advantage measures whether taking a teacher action and then letting the student finish the response improves expected final reward over following the student's own policy. Since this advantage is defined with respect to the current student, the same teacher action can be useful for one student but unhelpful for another.
AOPD therefore reinforces teacher actions with positive advantage and suppresses those with negative advantage. This is illustrated in \Cref{fig:intro} (left).

A direct estimate of these advantages would require generating new student continuations after alternative teacher actions.
We instead derive an unbiased importance-sampling estimator of the AOPD objective using only the student's on-policy trajectories and their terminal rewards.
This allows AOPD to incorporate outcome-level rewards into token-level teacher supervision without learning a value function, generating counterfactual continuations or teacher rollouts, or relying on an external process reward model. 

\noindent\textbf{Our contributions} are summarized as follows:
\begin{itemize}[leftmargin=0.8em]
    \item \textbf{AOPD algorithm.}
    In \Cref{sec:method}, we provide an advantage-weighted formulation of on-policy distillation and derive an unbiased estimator of its objective, without additional rollouts or auxiliary models. We also discuss clipping the importance sampling weights.

    \item \textbf{Experimental results.}
    In \Cref{sec:experiments}, we evaluate multiple model sizes on tasks such as math and coding under both external-teacher and self-distillation settings, showing improvements over OPD, OPSD, and GRPO. We further examine whether teacher actions are uniformly beneficial in an ablation study.  

    \item \textbf{Reward improvement beyond imitation.}
    In \Cref{sec:atd-theory}, we establish a reward-improvement guarantee for AOPD under an exact KL-proximal analysis and show that forward-KL OPD can reduce reward even with a stronger teacher. We further construct a two-state counterexample showing that simply adding an RL objective to OPD does not guarantee reward improvement.
\end{itemize}
\begin{figure}
    \centering
    \includegraphics[width=1.0\linewidth]{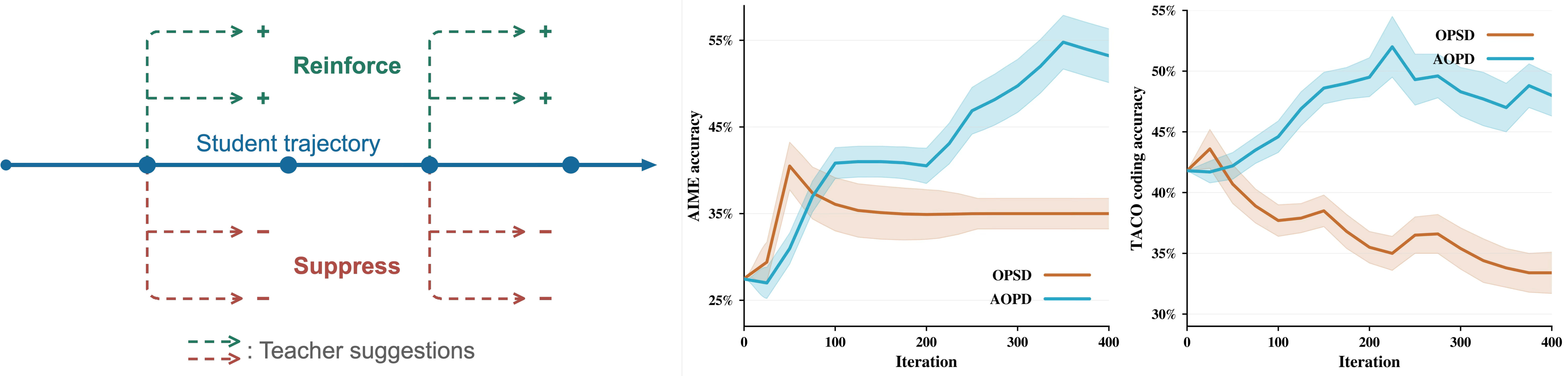}
    \caption{\textbf{(left)} The overall picture of our method. On the student sampled trajectories, not all the suggested actions by the teacher are useful to the student, and many suggestions are destructive. \textbf{(middle)} Performance comparison of our method with OPSD when training Qwen3-8B in non-thinking mode on Open-Thoughts. \textbf{(right)} Comparison with OPSD when training Qwen3-4B-Instruct on TACO for coding.}
    \label{fig:intro}
    \vspace{-8pt}
\end{figure}

\section{Related Work}
\paragraph{Offline supervision.}
Supervised fine-tuning and offline knowledge distillation learn from fixed demonstrations or teacher-generated trajectories \citep{hinton2015distilling,kim2016sequence,wen2023f,gu2024minillm}. As established in imitation learning, offline supervision suffers from covariate shift and compounding errors when the learned policy visits states absent from the training distribution \citep{ross2011reduction,bengio2015scheduled,agarwal2024policy}. This mismatch is particularly relevant for autoregressive reasoning, where early deviations change all subsequent steps drastically \citep{gu2024minillm,agarwal2024policy}.

\vspace{-7pt}
\paragraph{On-policy reinforcement learning.}
On-policy RL instead learns from trajectories generated by the current policy and has become central to LLM alignment and reasoning through methods such as PPO, RLOO, GRPO, and related policy-gradient algorithms \citep{schulman2017proximal,ouyang2022training,ahmadian2024back,shao2024deepseekmath,sabbaghi2026robust}. RLVR  directly optimizes task reward but typically provides sparse outcome-level supervision and can struggle when useful trajectories are rarely sampled \citep{casper2023open,yue2025does,xu2025kdrl,xiong2025minimalist}.

\vspace{-7pt}
\paragraph{On-policy distillation.}
On-policy distillation combines on-policy sampling with dense teacher supervision by matching teacher and student distributions along student-generated trajectories \citep{agarwal2024policy,gu2024minillm,lu2025onpolicydistillation,li2026rethinking,song2026survey}. OPSD further uses a privileged version of the student itself as the teacher, often conditioning it on ground-truth solutions or other privileged information \citep{zhao2026self,hubotter2026reinforcement,penaloza2026privileged,ye2026policy}. These approaches correct \emph{where} supervision is provided, but generally still apply teacher supervision uniformly and throughout the student's trajectory.

\vspace{-7pt}
\paragraph{Selective teacher supervision.}
Recent methods make this supervision selective using teacher uncertainty, teacher--student compatibility, local validity, verifier signals, or additional intervention rollouts \citep{jin2026entropy,wang2026not,saadi2026validity,gu2026rethinking,chen2026look,zhang2026incentivizing}. These criteria identify when teacher guidance may be useful, but do not directly define usefulness by its effect on the student's eventual task reward. AOPD instead introduces a new formulation for on-policy distillation that weights each teacher action by its advantage under the current student and estimates this signal from the existing on-policy rollouts and terminal rewards.

\vspace{-5pt}
\section{Method: Advantage-based On-Policy Distillation}\label{sec:method}
\vspace{-4pt}

Let $S$ denote the frozen student policy used to generate the current rollout batch, $S_\theta$ the policy being updated, and $T$ the teacher policy. Given a prompt $x$, the student generates a response $y=(y_0,\ldots,y_{H-1})$ according to
\(
S(y\mid x)=\prod_{t=0}^{H-1}S(y_t\mid h_t),
 h_t=(x,y_{<t})
\), then $h_t\sim S(\cdot\mid x)$ denotes a sampled trajectory up to step $t$. 
The terminal reward $R(x,y)\in [0,1]$ indicates task success. We assume a shared token vocabulary $\calV$ and access to teacher probabilities $T(\cdot\mid h_t)$ at student-generated histories.

\vspace{-5pt}
\paragraph{Reverse-KL distillation.}
Reverse-KL distillation \citep{agarwal2024policy} minimizes:
\begin{equation}
\label{eq:rkl}
\calL_{\mathrm{RKL}}(\theta)
=
\frac{1}{H}\sum_{t=0}^{H-1}
\E_{h_t \sim S(.)}
\left[
\KL\bigl(S_\theta(\cdot\mid h_t)\|T(\cdot\mid h_t)\bigr)
\right].
\end{equation}
At each step, the actions come from $a\sim S(\cdot\mid h_t)$ and the expectation is over log-ratio
$\log\frac{T(a\mid h_t)}{S(a\mid h_t)}$. Thus, the teacher evaluates actions already sampled by the student.

\vspace{-5pt}
\paragraph{Forward-KL distillation.}
Forward-KL distillation instead samples the action from teacher's distribution at each step of the student sampled trajectory and minimizes the log-ratio $\log\frac{S(a\mid h_t)}{T(a\mid h_t)}$ :
\begin{equation}
\label{eq:fkl}
\calL_{\mathrm{FKL}}(\theta)
=
\frac{1}{H}\sum_{t=0}^{H-1}
\E_{h_t \sim S(.)}
\left[
\KL\bigl(T(\cdot\mid h_t)\|S_\theta(\cdot\mid h_t)\bigr)
\right].
\end{equation}
Dropping the teacher entropy that is independent of $\theta$, \Cref{eq:fkl} is equivalent to the following maximization objective:

\begin{equation}
\label{eq:fkl_max}
J_{\mathrm{FKL}}(\theta)
=
\E_{x,\;y\sim S(\cdot\mid x)}
\left[
\frac{1}{H}\sum_{t=0}^{H-1}
\E_{a\sim T(\cdot\mid h_t)}
\left[\log S_\theta(a\mid h_t)\right]
\right].
\end{equation}
The objective reinforces teacher-supported actions at student-generated histories $h_t$. In practice, FKL and RKL objectives roughly reach the same performance, and we will develop our method for FKL. Critically, FKL reinforces the teacher actions without distinguishing whether they improve the student's final task outcome. 

\vspace{-5pt}
\paragraph{On-policy self-distillation.}
On-policy self-distillation (OPSD) \citep{zhao2026self} uses the same objective above, but it replaces the external teacher with a privileged version of the student itself. Given privileged information $y^*$, such as the ground-truth solution, the teacher distribution becomes:
\[
T_{\mathrm{OPSD}}(a\mid h_t)
=
S(a\mid h_t,y^*)
\]
and the rollout is still generated from the unprivileged student $S(\cdot\mid h_t)$. Our method makes no assumptions about how $T$ is obtained and therefore applies directly to both OPD and OPSD.

\vspace{-5pt}
\paragraph{Our method: Advantage-based on-policy distillation (AOPD).}
Starting from the FKL objective in \Cref{eq:fkl_max}, we preserve the teacher supervision at the same student-generated histories but change how teacher actions are weighted. Rather than reinforcing every teacher-supported action, we weight each action by its effect on the student's expected reward. For a history $h=(x,y_{<t})$, define:
\begin{align}\label{eq:student_values}
Q^S(h_t,a)
&=
\E_{y_{>t}\sim S(\cdot\mid h_t a)}
\left[R(x,y_{<t}\;a\; y_{>t})\right], \quad
V^S(h_t)
=
\E_{a\sim S(\cdot\mid h_t)}[Q^S(h_t,a)]
\nonumber\\
A^S(h_t,a)
&=
Q^S(h_t,a)-V^S(h_t).
\end{align}
These are the standard RL value and advantage function. The advantage $A^S$ compares the expected reward after taking $a$ with the expected reward when the student follows its own action policy. We therefore modify the FKL objective by weighting each teacher action by $A^S(h_t,a)$:
\begin{equation}
\label{eq:step_AOPD}
J_{\mathrm{AOPD}}(\theta)
=
\E_{x,\;y\sim S(\cdot\mid x)}
\left[
\frac{1}{H}\sum_{t=0}^{H-1}
\E_{a\sim T(\cdot\mid h_t)}
\left[
A^S(h_t,a)\log S_\theta(a\mid h_t)
\right]
\right].
\end{equation}
Unlike the forward KL objective, AOPD assigns positive weights to teacher actions that improve the student's expected reward and negative weights to those that reduce it. Both the advantage and the history distribution are defined under the frozen rollout policy $S$. In \Cref{thm:atd-improvement}, we show that our objective increases the average reward with guarantee, whereas OPD has an arbitrary sign that depends on the advantage of teacher's actions. 

\vspace{-5pt}
\paragraph{Estimating AOPD from student rollouts.}
Directly evaluating $Q^S$ for teacher actions would require separate student continuations from the teacher interventions. Instead, we estimate the entire objective using a student rollout $y^S\sim S(\cdot\mid x)$ and its terminal reward $R^S=R(x,y^S)$. At each step, let $a_t=y_t^S$ and define
\(
w_t=\frac{T(a_t\mid h_t)}{S(a_t\mid h_t)}
\).
The ratio $w_t$ converts expectations under the student into expectations under the teacher.
We use centered rewards $\widetilde R^S=R^S-b$ for a group with $G$ responses, where $b$ is a baseline independent of $y^S$ (e.g., the leave-one-out mean). The resulting estimator is:
\begin{equation}
\label{eq:AOPD_estimator}
\boxed{
\widehat J_{\mathrm{AOPD}}(\theta)
=
\frac{\widetilde R^S}{H}
\sum_{t=0}^{H-1}
\Big[
w_t\log S_\theta(a_t\mid h_t)- \E_{a\sim T} \big[\log S_\theta(a\mid h_t)
\big] \Big], \quad w_t := \frac{T(a_t\mid h_t)}{S(a_t\mid h_t)}
}
\end{equation}
In the following lemma, we show that the two terms within the sum estimate the contributions of $Q^S-b$ and $V^S-b$, respectively. So their difference obtains the desired advantage in \Cref{eq:step_AOPD}.

\begin{lemma}[Unbiased AOPD estimator]
\label{lem:AOPD_unbiased}
Assume that $T(a\mid h)>0$ implies $S(a\mid h)>0$ at every visited step. We have:
\[
\E\left[\widehat J_{\mathrm{AOPD}}(\theta)\right]
=
J_{\mathrm{AOPD}}(\theta)
\]
And the same identity holds for their gradients.
\end{lemma}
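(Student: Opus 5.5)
The plan is to treat the estimator term by term in $t$, using linearity of expectation and the tower property over the prefix $h_t$. Fix $t$ and condition on $h_t=(x,y_{<t})$, which under $y^S\sim S(\cdot\mid x)$ has exactly the history distribution appearing in $J_{\mathrm{AOPD}}$. The baseline $b$ is independent of $y^S$; for the leave-one-out mean it depends only on the other group members, which are i.i.d.\ and independent of $y^S$ given $x$. Hence $\E[b\,(\cdot)\mid h_t]=\E[b\mid x]\cdot\E[(\cdot)\mid h_t]$, and we may write $\bar b(x):=\E[b\mid x]$. Throughout, $\theta$ is fixed; $S$ is frozen, so $\log S_\theta(a\mid h_t)$ is just a deterministic function of $(h_t,a)$.

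\textbf{First term.} The first term is $\E[(R^S-b)\,w_t\log S_\theta(a_t\mid h_t)\mid h_t]$. Condition further on $a_t$. The continuation $y_{>t}$ is drawn from $S(\cdot\mid h_t a_t)$, so $\E[R^S\mid h_t,a_t]=Q^S(h_t,a_t)$. The factor $w_t\log S_\theta(a_t\mid h_t)$ is measurable in $(h_t,a_t)$. Therefore the term equals
\[
\sum_a S(a\mid h_t)\frac{T(a\mid h_t)}{S(a\mid h_t)}\bigl(Q^S(h_t,a)-\bar b(x)\bigr)\log S_\theta(a\mid h_t)
=\E_{a\sim T(\cdot\mid h_t)}\bigl[(Q^S(h_t,a)-\bar b(x))\log S_\theta(a\mid h_t)\bigr].
\]
The support assumption is exactly what licenses this change of measure: every $a$ with $T(a\mid h_t)>0$ has $S(a\mid h_t)>0$, so no teacher mass is lost when the sum is taken over the student's support.

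\textbf{Second term.} The second term is $\E[(R^S-b)\,\E_{a\sim T}[\log S_\theta(a\mid h_t)]\mid h_t]$. Here the inner teacher expectation is a function of $h_t$ alone, and $\E[R^S\mid h_t]=V^S(h_t)$. So the term equals $(V^S(h_t)-\bar b(x))\,\E_{a\sim T}[\log S_\theta(a\mid h_t)]$.

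\textbf{Combining.} Subtracting the second term from the first, the $\bar b(x)$ contributions cancel. What remains is $\E_{a\sim T}[(Q^S-V^S)(h_t,a)\log S_\theta(a\mid h_t)]=\E_{a\sim T}[A^S(h_t,a)\log S_\theta(a\mid h_t)]$. Averaging over $h_t$ and summing over $t$ with the factor $1/H$ gives $J_{\mathrm{AOPD}}(\theta)$.

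\textbf{Gradients.} The identity holds for every $\theta$. Everything random (the rollout, $w_t$, $R^S$, $b$) is independent of $\theta$, because sampling uses the frozen $S$. So $\nabla_\theta$ commutes with the expectation: for a finite vocabulary and horizon this is a finite sum, and we only need $\log S_\theta$ differentiable. Applying the same argument to $\nabla_\theta\log S_\theta$ in place of $\log S_\theta$ yields the gradient identity.

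The main point needing care is the baseline cancellation. We need $b$ to be independent of $y^S$ given $x$, so that it factors out identically in both terms and can be dropped. This is why the baseline must exclude the rollout's own reward. If $S_\theta$ assigns zero probability to some teacher-supported action, the $\log$ becomes $-\infty$; we would assume $S_\theta$ has full support, as is true for softmax policies.
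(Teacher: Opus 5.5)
Your proposal is correct and follows the paper's proof essentially step for step. You use the tower property over $a_t$ with the importance-weight change of measure (licensed by the support assumption) for the first term, use $\E[R^S\mid h_t]=V^S(h_t)$ for the second, cancel the baseline, and swap gradient and expectation. The only cosmetic difference is that you integrate the baseline out as $\bar b(x)=\E[b\mid x]$ via conditional independence, whereas the paper conditions on $b$ directly; both rely on the same assumption that $b$ is independent of $y^S$ given $x$.
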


The proof is given in Appendix~\ref{app:proofs}. 
When $\widetilde R^S>0$, AOPD reinforces the successful student action, with a larger update when the teacher also assigns it high probability. Unlike GRPO, this focuses the positive update on successful actions supported by the teacher. The second term suppresses alternative teacher actions, since the sampled student action already led to success. When $\widetilde R^S<0$, the signs reverse: AOPD suppresses the failed student action, especially when the teacher also favors it, while promoting alternative teacher actions. Thus, AOPD uses the teacher to distinguish which actions should be reinforced or corrected based on the rollout outcome.


\vspace{-5pt}
\paragraph{Clipping the importance sampling.}
Large importance sampling weights can make rare sampled tokens dominate the update, so we cap the importance sampling as:
\begin{equation}\label{eq:clipping}
    \widetilde w_t=\min\left(\frac{T(a_t\mid h_t)}{S(a_t\mid h_t)},\,c\right)
\end{equation}
In our experiments this constant is $c = 4$. We apply the same truncation to teacher probabilities in the second term without re-normalization:
$\widetilde T(a | h)=\min\{T(a | h),cS(a | h)\}$. The resulting estimator is:
\begin{equation}
\label{eq:AOPD_clipped}
\widetilde J_{\mathrm{AOPD}}(\theta)
=
\frac{\widetilde R^S}{H}
\sum_{t=0}^{H-1}
\left[
\widetilde w_t\log S_\theta(a_t\mid h_t)
-
\sum_{a\in\calV}
\widetilde T(a\mid h_t)\log S_\theta(a\mid h_t)
\right].
\end{equation}
We show in \Cref{cor:atd-clipping} that applying the same truncation to both terms preserves the advantage-weighted objective and its reward-improvement guarantee. We further discuss the impact of clipping on training in Appendix \ref{app:additional_exp}. \uline{\mbox{\Cref{alg:AOPD}} summarizes the final version of our algorithm.}

\begin{algorithm}[t]
\caption{AOPD with the unbiased estimator}
\begin{algorithmic}[1]
\Require Student $S_\theta$, teacher $T$, reward function $R$, group size $G$
\For{each training iteration}
    \State Freeze the rollout policy $S\leftarrow S_\theta$.
    \For{each prompt $x\in\calB$}
        \State Sample $G$ independent responses $y^{(i)}\sim S(\cdot\mid x)$.
        \State Compute terminal rewards $R_i=R(x,y^{(i)})$.
        \State Set $b_i=\frac{1}{G-1}\sum_{j\ne i}R_j$ for $G>1$.
        \State Evaluate student and teacher probabilities at every visited history.
        \State Compute $\widetilde J_{\mathrm{AOPD}}^{(i)}(\theta)$ using \Cref{eq:AOPD_clipped}.
    \EndFor
    \State $\calL(\theta)\leftarrow-\frac{1}{|\calB|G}\sum_{x\in\calB}\sum_{i=1}^G\widetilde J_{\mathrm{AOPD}}^{(i)}(\theta)$  \Comment{differentiating only through $S_\theta$.}
\EndFor
\end{algorithmic}
\label{alg:AOPD}
\end{algorithm}

\section{Experiments}\label{sec:experiments}
We evaluate AOPD in two settings: \textbf{(i)} distillation from a stronger external teacher and \textbf{(ii)} self-distillation from a version of the student with privileged information. We compare AOPD against existing distillation and RL methods across math, coding, biology, and medical question answering.

\subsection{Experimental Setup}

\vspace{-5pt}
\paragraph{Tasks.} We conduct our experiments across four tasks:\\
\textbf{Math:} We train on Open-Thoughts \citep{guha2025openthoughtsdatarecipesreasoning} and evaluate on held-out Open-Thoughts problems, AIME24, AIMO-Validation-AMC (AMC), and HMMT \citep{aimovalidationaime,aimovalidationamc,dekoninck2026matharena}.
\par\noindent
\textbf{Coding:} We train on verified TACO problems \citep{li2023taco,likaixin2024taco-verified} and evaluate on HumanEval, HumanEval+, and MultiPL-E \citep{chen2021evaluating,cassano2022multipl}.
\par\noindent
\textbf{Biology:} We train on SciKnowEval \citep{feng2024sciknoweval} and evaluate on 50 held-out biology questions and the biology subset of MMLU-Pro \citep{wang2024mmlu}.
\par\noindent
\textbf{Medical QA:} We train on MedMCQA \citep{pal2022medmcqa} and evaluate on held-out MedMCQA and the separate MedQA benchmark \citep{jin2021disease}.

\vspace{-5pt}
\paragraph{Models and configurations.}
Following previous work \citep{zhao2026self,ye2026policy}, we use Qwen2.5 and Qwen3 models of different sizes \citep{yang2025qwen3}.
For external-teacher distillation, we use Qwen3-1.7B as the student, with Qwen3-4B-Instruct as the teacher for math training and Qwen3-8B as the teacher for coding.
For self-distillation, we use Qwen3-1.7B and Qwen3-8B for math, and Qwen2.5-7B-Instruct and Qwen3-4B-Instruct for biology, medical question answering, and coding.
\uline{All Qwen3 experiments use non-thinking mode for both the student and teacher.}

\vspace{-5pt}
\paragraph{Baselines.}
For external-teacher distillation, we compare AOPD with OPD using forward-KL (FKL) and reverse-KL (RKL) objectives \citep{agarwal2024policy,lu2025onpolicydistillation}, GRPO \citep{shao2024deepseekmath}, SFT, and KDRL \citep{xu2025kdrl}, which combines OPD and GRPO.
For self-distillation, we compare with OPSD and GRPO, using the forward-KL objective for OPSD following previous work \citep{zhao2026self,jin2026entropy}.
Within each comparison, we use the same number of training iterations, a batch size of 32, and the same LoRA adapters with $(r = 64, \alpha = 128)$, while selecting method-specific hyperparameters.

\vspace{-5pt}
\paragraph{Hyperparameters.}
To preserve the support required by the importance-sampling estimator in \Cref{eq:AOPD_estimator}, we disable top-$p$ and top-$k$ pruning for AOPD.
We use a sampling temperature of $T=0.8$, which we found effective for generating coherent Qwen responses.
For OPD and OPSD, we use $T=1.1$, top-$p=0.95$, and top-$k=20$, following \citet{zhao2026self} and the settings that reproduce their best performance.
For the main Qwen3 math and coding experiments, we use a learning rate of $2\times10^{-5}$ for AOPD.

\subsection{External-Teacher Distillation}\label{sec:teacher_distill}

We first evaluate AOPD with stronger external teachers on math and coding. For math, we use Qwen3-4B-Instruct as the teacher because it performs better on Open-Thoughts than Qwen3-8B in non-thinking mode.
All methods train for 450 iterations on Open-Thoughts and 350 iterations on TACO.
Math training uses a binary correctness reward. For coding, we use partial rewards based on five test cases, which we found more effective for training. We use $G=8$ for GRPO and KDRL, following \citet{xu2025kdrl}, and $G=4$ for OPD and AOPD. This is because our ablations show that larger groups provide little accuracy benefit for OPD and AOPD (Appendix~\ref{app:additional_exp}).
For math, we use a maximum completion length of 8,192 tokens during training and 16,192 tokens at evaluation.

\vspace{-5pt}
\paragraph{Results.}
 \Cref{fig:OPD_results} compares OPD (FKL), GRPO, and AOPD during training.
On math, OPD improves quickly and then plateaus, consistent with observations in previous work \citep{yang2026oprd,tan2026atod}. On the other hand, GRPO improves more slowly and requires more iterations to converge. AOPD also improves rapidly at the beginning but also continues to improve later in training.
This is because unlike standard OPD, which enforces uniform supervision, our objective dynamically adapts which teacher signals to follow as the student improves.
Although OPD and GRPO achieve comparable Open-Thoughts evaluation rewards, OPD performs  better on AIME24, as shown in \Cref{fig:OPD_results} (middle).

On TACO, OPD makes little progress in evaluation reward, whereas AOPD improves and remains above GRPO (\Cref{fig:OPD_results}, right).
Nevertheless, OPD (FKL) improves HumanEval and MultiPL-E performance over the base model (\Cref{table:OPD_main_results}). Overall, our finding across the experiments is that the all distillation methods achieve better performance on benchmarks regardless of the training reward.

\begin{figure*}
    \centering
    \begin{subfigure}{0.333\linewidth}
        \includegraphics[width=\linewidth]{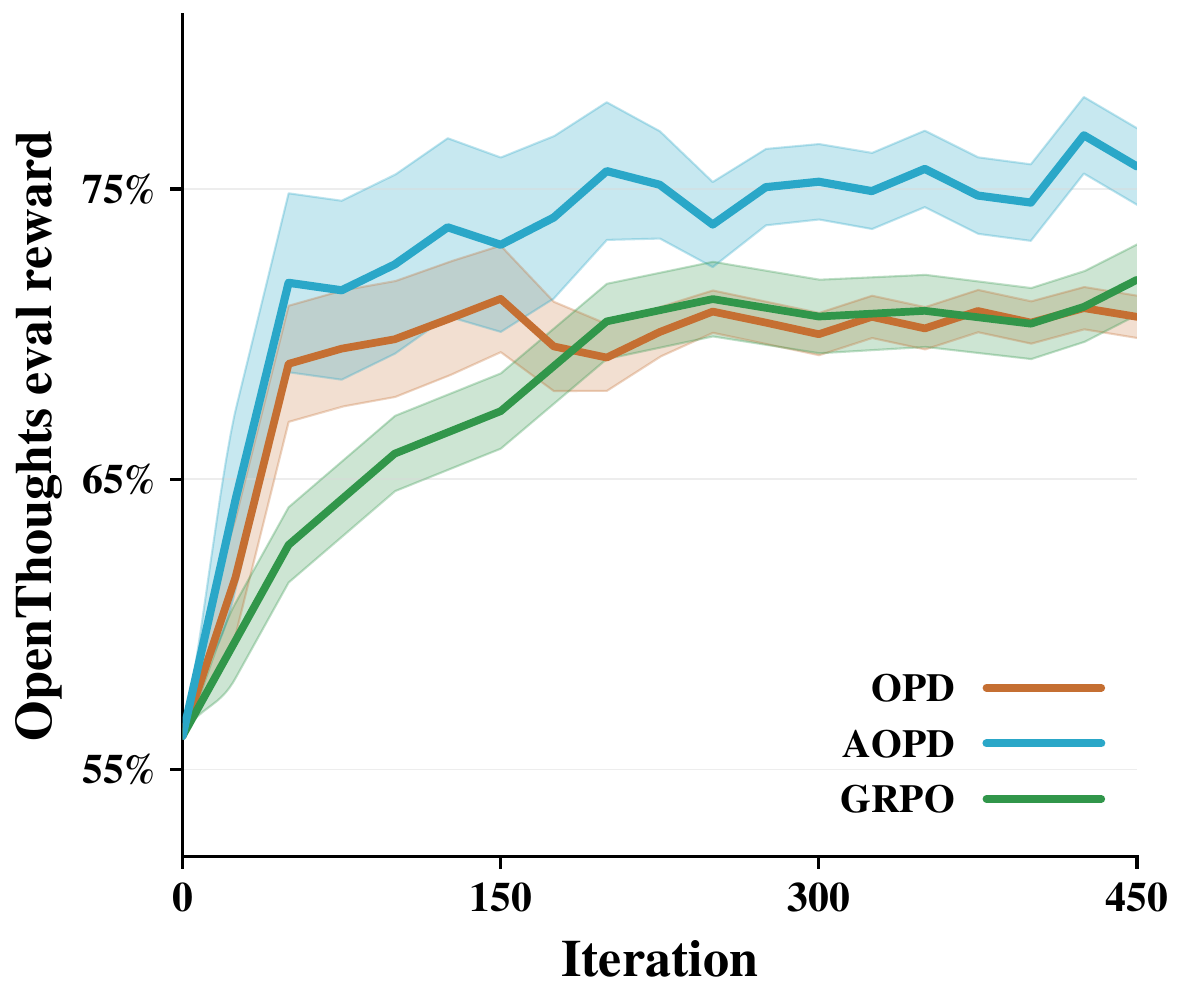}
    \end{subfigure}%
    \begin{subfigure}{0.333\linewidth}
        \includegraphics[width=\linewidth]{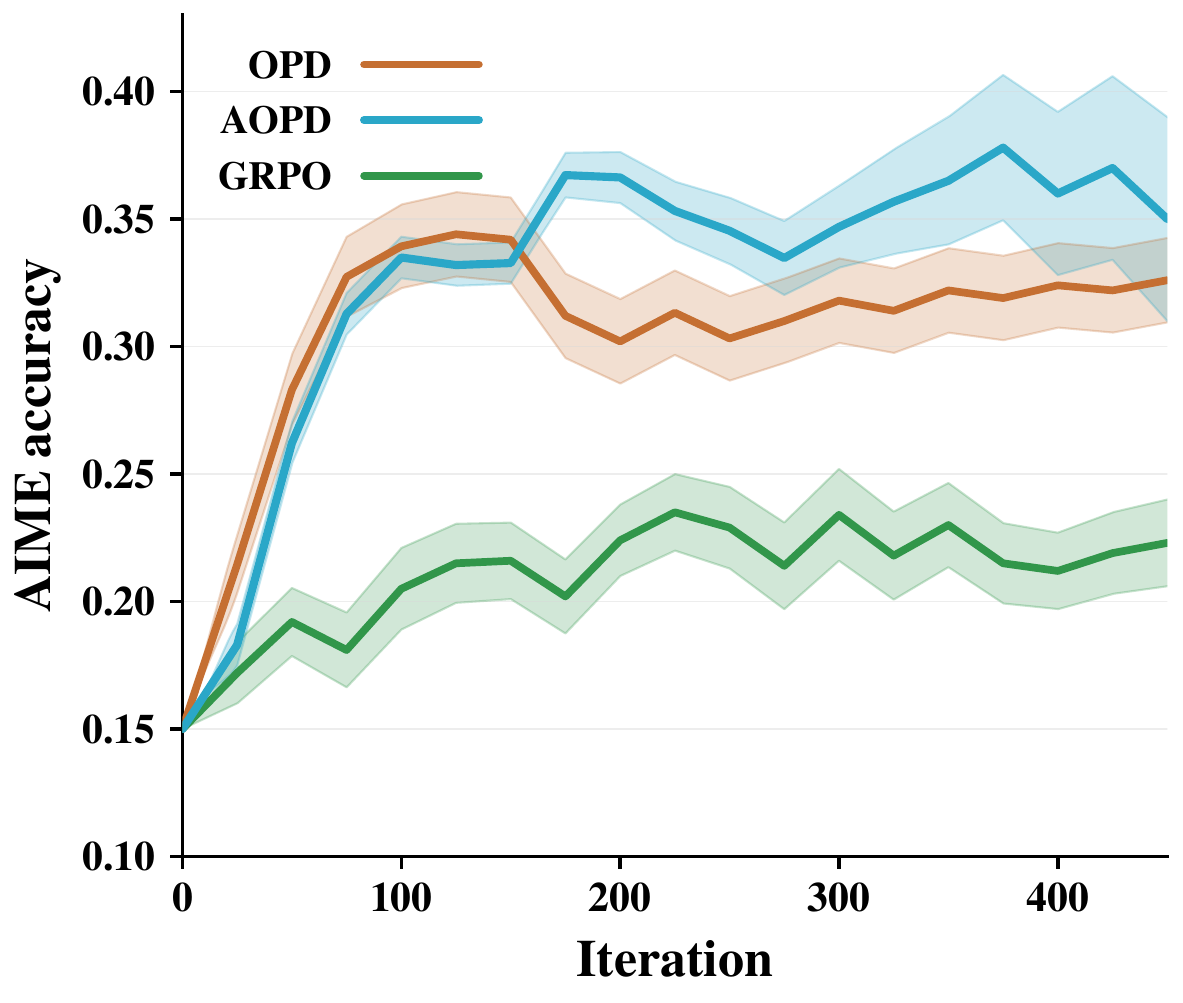}
    \end{subfigure}%
    \begin{subfigure}{0.333\linewidth}
        \includegraphics[width=\linewidth]{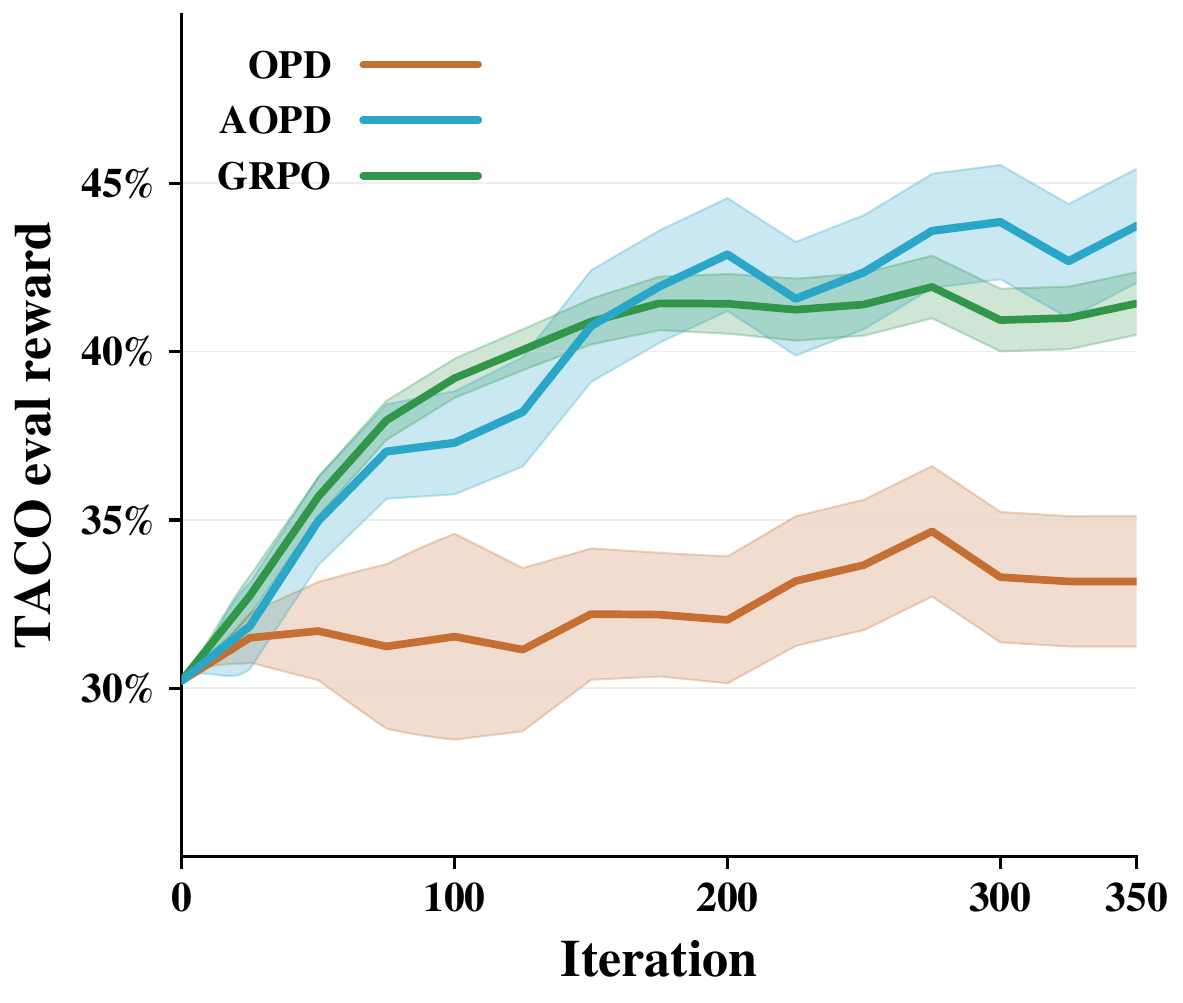}
    \end{subfigure}
    \vspace{-15pt}
    \caption{Training progress under external-teacher distillation with Qwen3-1.7B as the student. \textbf{(Left/middle)} Open-Thoughts evaluation reward and AIME24 performance. \textbf{(Right)} TACO evaluation reward for coding.}
    \label{fig:OPD_results}
\end{figure*}

\Cref{table:OPD_main_results} shows the comparison for all methods across the math and coding benchmarks. For math, AOPD achieves the highest pass@1 on Open-Thoughts, AIME24, and AMC. It improves AIME24 by 2.4 percentage points and AMC by 3.3 points over the baselines.
For coding, AOPD achieves the best scores on HumanEval. However, GRPO achieves the highest HumanEval pass@10, while OPD (FKL) performs the best on MultiPL-E.

\begin{table*}[t!]
  \centering
  \vspace{-4pt}
  \resizebox{\textwidth}{!}{%
    \begin{tabular}{@{}lcccccccc@{}}
      \toprule
      & \multicolumn{4}{c}{\textbf{Math Training on Open-Thoughts}}
      & \multicolumn{4}{c}{\textbf{Code Training on TACO}} \\
      \cmidrule(lr){2-5} \cmidrule(lr){6-9}
      & OpenThoughts & AIME24 & AMC & HMMT
      & \multicolumn{3}{c}{HumanEval}
      & MultiPL-E \\
      Models
      & pass@1 & pass@1 & pass@1 & pass@1
      & HE & HE+ & pass@10 & Avg.\ (7 lang.) \\
      \midrule
      \textbf{Qwen3-1.7B} & 56.6 & 14.7 & 40.1 & 5.8 & 68.3 & 61.6 & 82.5 & 31.4\\
      + SFT & 68.0 & 23.3 & 53.5 & 12.6 & 69.5 & 62.3 & 82.4 & 31.0\\
      + GRPO & 71.8 & 22.3 & 51.4 & 10.6 & 71.8 & 64.0 & \textbf{87.9} & 31.1\\
      + KDRL & 72.8 & 25.0 & 55.8 & 12.1 & 71.5 & 66.7 & 84.4 & 31.0\\
      + OPD (FKL) & 73.2 & 32.1 & 61.8 & \textbf{16.0} & 71.3 & 66.5 & 85.3 & \textbf{33.3}\\
      + OPD (RKL) & 72.0 & 32.6 & 61.0 & 14.2 & 68.3 & 62.2 & 81.8 & 33.2\\
      + AOPD & \textbf{78.4} & \textbf{35.0} & \textbf{65.1} & 15.8 & \textbf{74.4} & \textbf{68.9} & 86.5 & 32.5\\
      \bottomrule
    \end{tabular}
  }
  \vspace{-6pt}
  \caption{External-teacher distillation on math and coding with Qwen3-1.7B as the student. The teachers are Qwen3-4B-Instruct for math and Qwen3-8B for coding.}
  \label{table:OPD_main_results}
\end{table*}

\subsection{Self-Distillation}\label{sec:self_distill}

We next replace the external teacher with a version of the student conditioned on privileged information.
For math, we use Qwen3-1.7B and Qwen3-8B, training for 400 iterations on Open-Thoughts with $G=4$ for OPSD and AOPD. The maximum training completion length is 8,192 tokens.
For the remaining tasks, we use Qwen2.5-7B-Instruct and Qwen3-4B-Instruct for biology, medical question answering, and coding. We train models for 300 iterations with a maximum completion length of 2,048 tokens on medical QA, and for 400 iterations with a maximum completion length of 4,096 tokens on biology and coding. 

\vspace{-5pt}
\paragraph{Math results.}
\Cref{table:OPSD_main_results} compares GRPO, OPSD, and AOPD for both students.
As \Cref{fig:OPSD_results} shows, for Qwen3-1.7B, OPSD makes little progress on Open-Thoughts, consistent with previous work \citep{li2026policy}. This suggests limited benefit from the self-distillation signal in this setting.
Nevertheless, AOPD reaches a comparable evaluation reward to GRPO and outperforms the baselines on all three math benchmarks.
On AIME24, AOPD reaches 35.0\%, improving over GRPO by 12.7 percentage points and OPSD by 19.4 points.
For Qwen3-8B, OPSD is more effective and improves performance across the benchmarks. AOPD again achieves substantial gains in evaluation reward and benchmark performance, as illustrated in \Cref{table:OPSD_main_results} and \Cref{fig:intro} (middle).
In particular, AOPD reaches 52.1\% on AIME24, compared with 41.7\% for OPSD, a gain of 10.4 percentage points.

\vspace{-5pt}
\paragraph{Results with instruct models.}
We further evaluate AOPD across biology, medical question answering, and coding.
As shown in \Cref{table:self_instmodels_working}, for biology, AOPD achieves the highest accuracy on SciKnowEval Biology and MMLU-Pro with Qwen2.5-7B, reaching 54.0\% and 73.8\%, respectively.
For medical question answering, AOPD outperforms GRPO and OPSD on both MedQA and MedMCQA with both models. In particular, with Qwen2.5-7B-Instruct, AOPD reaches 74.8\% on MedQA and 69.8\% on MedMCQA, improving over OPSD by 2.7 and 3.2 percentage points. The training progress on MedQA is also shown in \Cref{fig:OPSD_results}.
For coding, AOPD improves the evaluation reward significantly as shown in \Cref{fig:intro} (right). AOPD achieves the best HumanEval+ scores with both models. On MultiPL-E, AOPD reaches 61.1\% with Qwen3-4B-Instruct, outperforming GRPO by 5.9 points, although it does not outperform GRPO with Qwen2.5-7B-Instruct.

\begin{table*}[t!]
\centering
\vspace{-4pt}
\small
\setlength{\tabcolsep}{5pt}
\begin{tabular}{@{}lcccccccc@{}}
\toprule
& \multicolumn{4}{c}{\textbf{Qwen3-1.7B}}
& \multicolumn{4}{c}{\textbf{Qwen3-8B}} \\
\cmidrule(lr){2-5} \cmidrule(lr){6-9}
Method
& OpenThoughts & AIME24 & AMC & HMMT
& OpenThoughts & AIME24 & AMC & HMMT \\
& pass@1 & pass@1 & pass@1 & pass@1
& pass@1 & pass@1 & pass@1 & pass@1 \\
\midrule
Base
& 56.6 & 14.7 & 40.1 & 5.8
& 68.8 & 27.5 & 59.6 & 14.2 \\
+ GRPO
& \textbf{71.8} & 22.3 & 51.4 & 10.6
& 78.5 & 30.6 & 63.8 & 13.6 \\
+ OPSD
& 55.2 & 15.6 & 45.7 & 6.4
& 72.7 & 41.7 & 75.0 & 18.4 \\
+ AOPD
& 70.2 & \textbf{35.0} & \textbf{61.4} & \textbf{14.2}
& \textbf{84.6} & \textbf{52.1} & \textbf{78.6} & \textbf{20.8} \\
\bottomrule
\end{tabular}
\vspace{-6pt}
\caption{Math results for self-distillation on Open-Thoughts with Qwen3-1.7B and Qwen3-8B, both in non-thinking mode.}
\label{table:OPSD_main_results}
\vspace{-5pt}
\end{table*}

\begin{table*}[t]
\centering
\small
\setlength{\tabcolsep}{6pt}
\begin{tabular}{@{}lllrrrr@{}}
\toprule
& Model & Benchmark & Base & GRPO & OPSD & AOPD \\
\midrule
\multirow{4}{*}{Biology}
& \multirow{2}{*}{Qwen2.5-7B-Instruct} & SciKnowEval & 20.0 & 42.0 & 41.0 & \textbf{54.0} \\
& & MMLU-Pro & 71.3 & 72.5 & 71.6 & \textbf{73.8} \\
\cmidrule(lr){2-7}
& \multirow{2}{*}{Qwen3-4B-Instruct} & SciKnowEval & 36.0 & 38.0 & \textbf{44.0} & 32.0 \\
& & MMLU-Pro & 84.5 & 84.1 & \textbf{84.9} & 84.8 \\
\midrule
\multirow{4}{*}{Medical}
& \multirow{2}{*}{Qwen2.5-7B-Instruct} & MedQA & 61.5 & 70.3 & 72.1 & \textbf{74.8} \\
& & MedMCQA & 56.5 & 64.5 & 66.6 & \textbf{69.8} \\
\cmidrule(lr){2-7}
& \multirow{2}{*}{Qwen3-4B-Instruct} & MedQA & 72.7 & 71.1 & 70.6 & \textbf{76.8} \\
& & MedMCQA & 60.2 & 59.8 & 60.1 & \textbf{61.6} \\
\midrule
\multirow{4}{*}{Coding}
& \multirow{2}{*}{Qwen2.5-7B-Instruct} & HE+ & 76.0 & 76.2 & 76.8 & \textbf{77.8} \\
& & MultiPL-E & 38.3 & \textbf{39.9} & 38.2 & 38.5 \\
\cmidrule(lr){2-7}
& \multirow{2}{*}{Qwen3-4B-Instruct} & HE+ & 83.1 & 84.2 & 81.8 & \textbf{85.1} \\
& & MultiPL-E & 55.9 & 55.2 & 53.3 & \textbf{61.1} \\
\bottomrule
\end{tabular}
\caption{Self-distillation results across biology, medical question answering, and coding. AOPD outperforms GRPO and OPSD across all three domains.}
\label{table:self_instmodels_working}
\end{table*}


\begin{figure*}[t]
    \centering
    \begin{subfigure}{0.31\linewidth}
        \includegraphics[width=\linewidth]{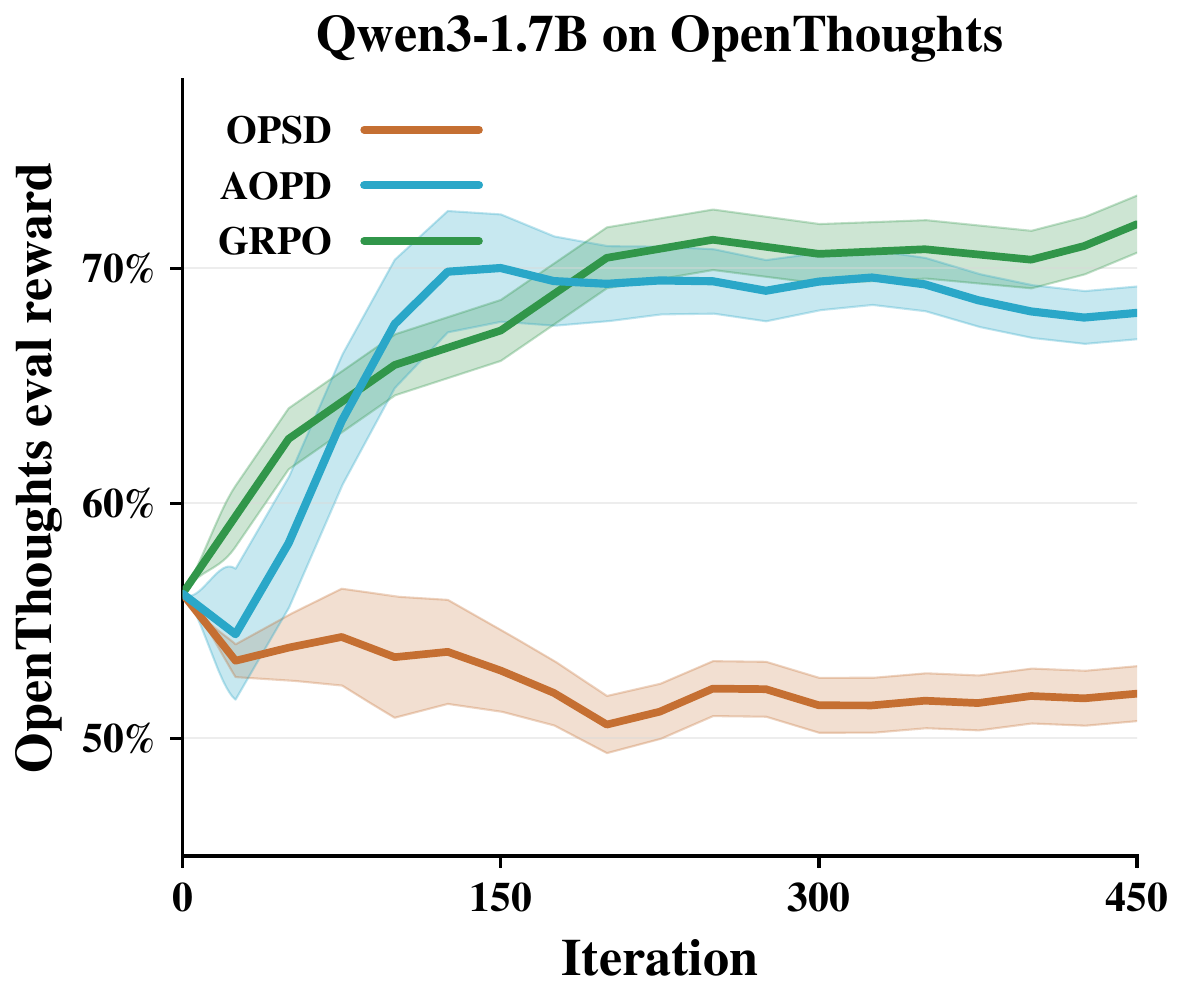}
    \end{subfigure}%
    \begin{subfigure}{0.31\linewidth}
        \includegraphics[width=\linewidth]{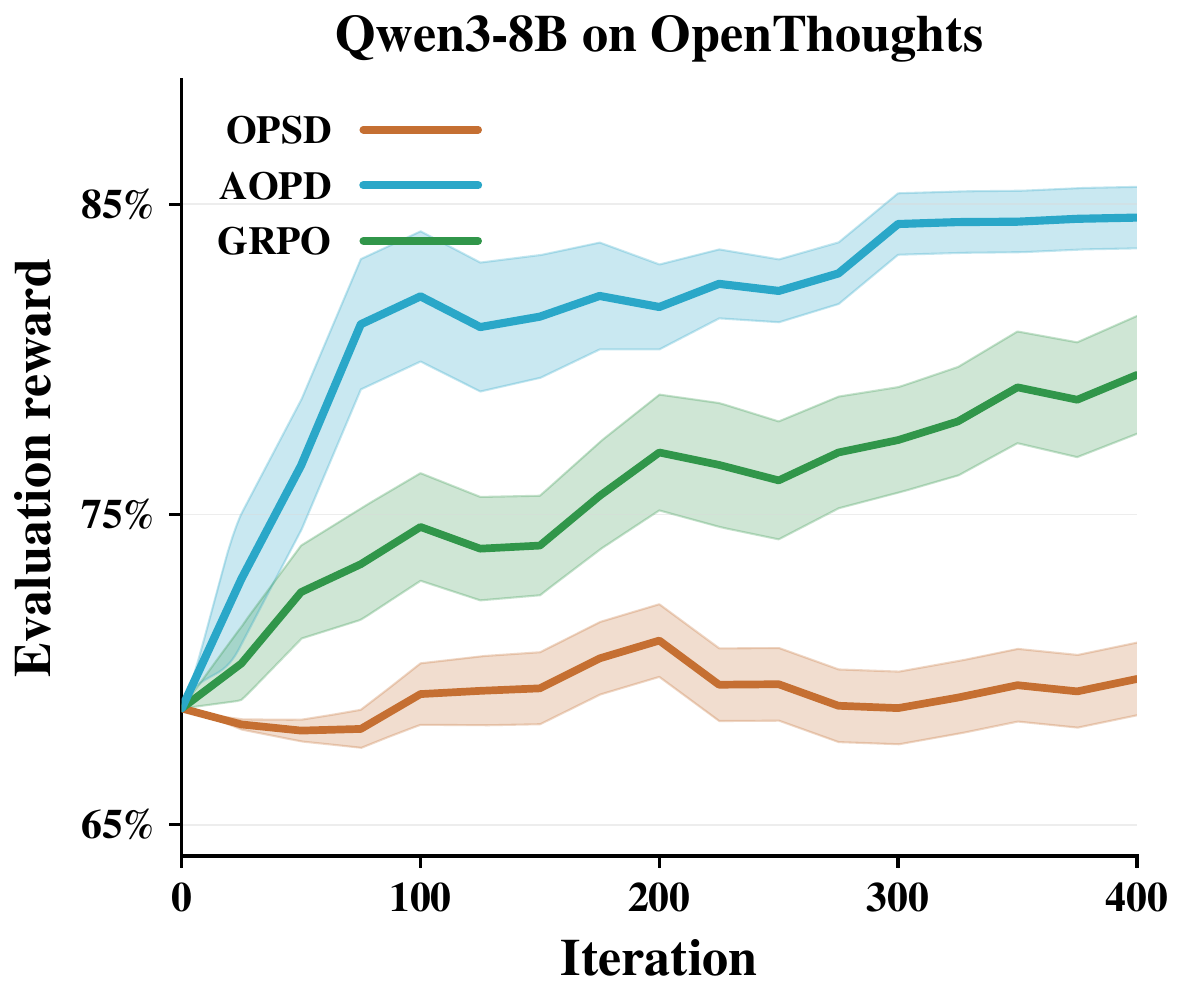}
    \end{subfigure}%
    \begin{subfigure}{0.31\linewidth}
        \includegraphics[width=\linewidth]{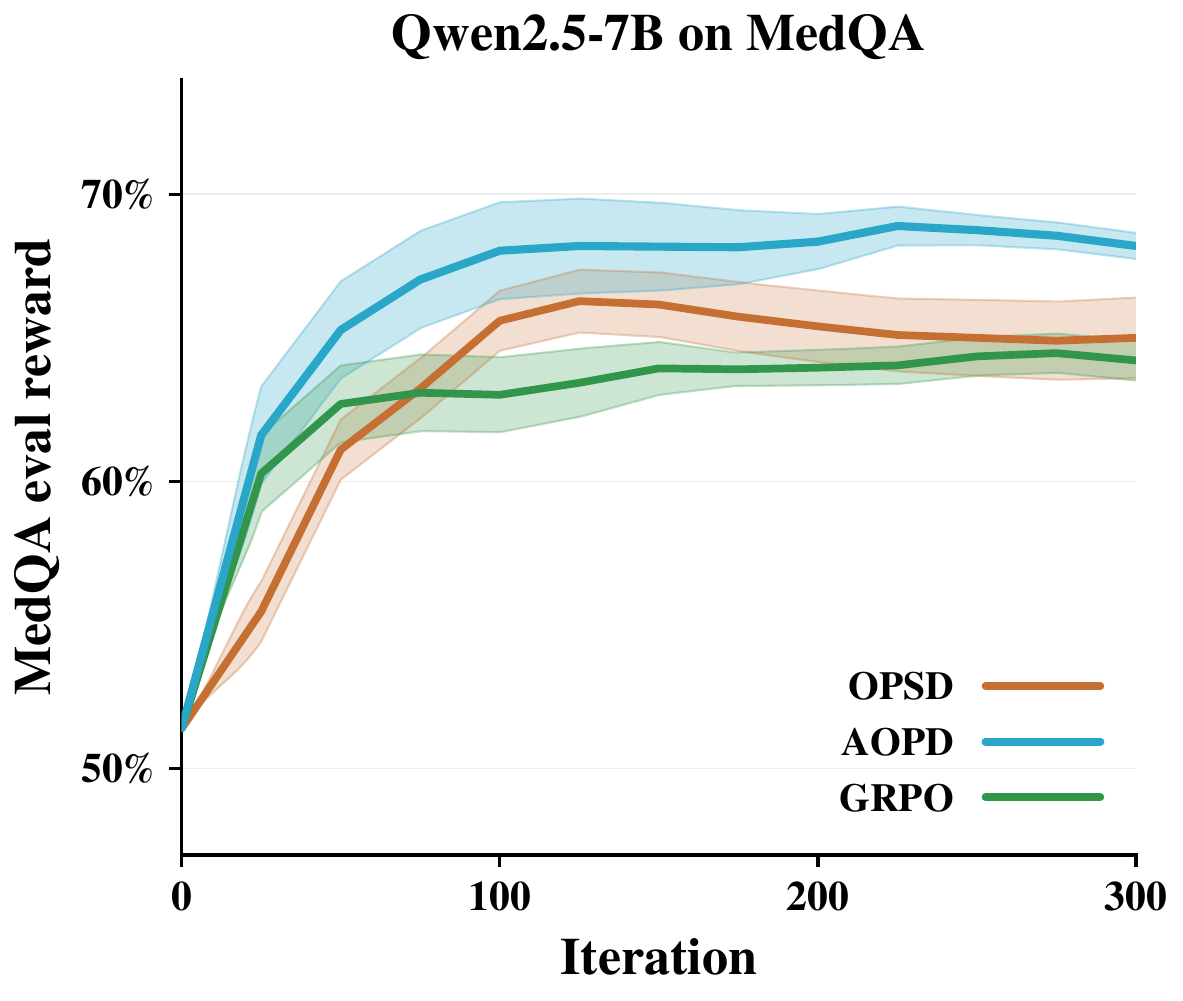}
    \end{subfigure}
    \vspace{-7pt}
    \caption{Training progress for GRPO, OPSD, and AOPD in self-distillation. \textbf{(Left/middle)} Open-Thoughts evaluation reward for Qwen3-1.7B and Qwen3-8B. \textbf{(Right)} MedQA test accuracy after training on MedMCQA with Qwen2.5-7B-Instruct.}
    \label{fig:OPSD_results}
\end{figure*}

\subsection{Ablation: Is the teacher always useful?}
To test whether teacher-preferred actions necessarily benefit the student, we conduct a single-token intervention study with Qwen3-1.7B as the student and Qwen3-4B-Instruct-2507 as the teacher. We also consider a self-distillation setting, where the teacher is Qwen3-1.7B with access to privileged information. For a set of 512 student response on Open-Thoughts, we select a position 
either \textbf{(i)} uniformly at random or \textbf{(ii)} by maximizing the KL divergence \(\KL(T(\cdot|h) \|S(\cdot|h))\) between the teacher and student distributions. We replace the student’s token with the teacher’s highest-probability token, then let the student complete the response greedily under the same 8192-token total budget. We compare the final answer of this intervention with the original response. 

The results are shown in \Cref{tab:teacher_intervention}. Teacher interventions cause both improvements and failures: with choosing the highest-disagreement position, teacher actions help (i.e., changing the reward to 1) in 9.6\% of cases but harm (i.e., changing the reward to 0) in 9.8\%. The high unchanged rate is expected as single-token interventions on the entire trajectory can rarely flip the final outcomes. 

Interestingly, the privileged self-teacher provides more helpful interventions than the  external teacher at uniformly random positions (8.6\% vs. 6.9\%).  This suggests that self-distillation can provide more beneficial guidance in some settings, despite using a smaller model.  This further highlights the importance of selecting teacher guidance based on its contribution to the student's reward rather than the teacher's preferences alone.

\begin{table}[t]
\centering
\begin{tabular}{llccc}
\toprule
Teacher & Intervention position & Helpful (\%) & Unchanged (\%) & Harmful (\%) \\
\midrule
\multirow{2}{*}{External}
    & Random                  & 6.9 & 85.1 & 8.0 \\
    & Highest KL disagreement & 9.6 & 80.6 & 9.8 \\
\midrule
\multirow{2}{*}{Self-distilled}
    & Random                  & 8.6 & 83.4 & 8.0 \\
    & Highest KL disagreement & 9.0 & 81.4 & 9.6 \\
\bottomrule
\end{tabular}
\caption{Effect of replacing a student token with the teacher's preferred token under external-teacher and self-distillation settings. Interventions can both improve and harm the student's final outcome.}
\label{tab:teacher_intervention}
\end{table}



\vspace{-4pt}
\section{Theory: Reward Improvement with Advantage-based Distillation}
\label{sec:atd-theory}

Although a teacher policy may achieve a higher reward than the student when evaluated as a complete policy, this does not imply that every partial update toward the teacher improves the student. In practice, training does not replace the student policy with the teacher policy directly; rather, it proceeds through a finite sequence of updates, often with a student whose capacity is limited relative to that of the teacher. Consequently, the teacher’s eventual advantage does not establish that intermediate training updates improve task performance.

To characterize the effect of training on the student, we analyze a single update under different objectives and examine whether it increases the student’s expected final reward.
\vspace{-4pt}
\paragraph{A proximal view of distillation.}
Following \Cref{sec:method}, let \(S\) denote the current, frozen student model, \(T\) the teacher model, and \(A^S\) the student advantage function. Let \(R(x,y)\in[0,1]\) denote the terminal reward obtained by response \(y\) for prompt \(x\). We define the expected task reward of a policy $\pi$ as
\(
\calR(\pi)
=
\E_{x,\,y\sim\pi(\cdot\mid x)}[R(x,y)]
\).

Our goal is to analyze a limited update from the current frozen student $S$ to to the updated policy $\pi$, rather than assuming the policy can move far arbitrary. To model this limited update from the current student, we follow standard trust-region analyses \citep{schulman2015trust,tomar2020mirror} and the proximal analysis in \cite{sabbaghi2026infosft}. We restrict the KL movement from $S$ with the following constraint:
\[
\KL(S\|\pi) =   \frac{1}{H}\sum_{t=0}^{H-1}
\E_{h_t\sim S(\cdot) }
\left[
\KL\bigl(S(\cdot\mid h_t)\,\|\,\pi(\cdot\mid h_t)\bigr)
\right]
\le \epsilon
\]
Adding this constraint to the optimization for objective $J$ provides a controlled notion of how much the student can change in one update. Writing the Lagrangian of this constrained problem gives the proximal update:
\begin{equation}
\label{eq:theory-prox}
S^+
\in
\arg\max_{\pi}
\left\{
J(\pi)-\frac{1}{\eta}\KL(S\|\pi)
\right\},
\end{equation}
where $J$ is either the FKL-OPD objective in \Cref{eq:fkl_max} or the AOPD objective in \Cref{eq:step_AOPD}. Here $1/\eta$ is the dual variable for $\epsilon$, so smaller $\eta$ gives a more conservative update. We conduct the analysis in policy space, characterizing the local update induced by each objective rather than the parameter-space dynamics of SGD. Accordingly, we write the student policy as \(\pi\) in place of \(S_\theta\). Throughout, we hold the teacher, rollout distribution, and student advantage function \(A^S\) fixed at their values for the current student policy. Under this proximal-update framework, the following result compares FKL-OPD with AOPD:

\begin{theorem}[Reward Improvement from a Single Proximal Update]
\label{thm:atd-improvement}
\textbf{(i)} For sufficiently small $\eta>0$, the exact proximal updates satisfy the following relation at every reachable history $h$:
\vspace{-2pt}
\begin{equation}\label{eq:change_FKL}
    \E_{a\sim S_{\mathrm{FKL}}^+(\cdot\mid h)}
[A^S(h,a)]
=
\frac{\eta}{1+\eta}
\E_{a\sim T(\cdot\mid h)}
[A^S(h,a)],
\end{equation}
and its sign under OPD remains indeterminate  (see \Cref{cor:atd-hybrid} for concrete example). Whereas,
\vspace{-2pt}
\begin{equation}\label{eq:change_AOPD}
    \E_{a\sim S_{\mathrm{AOPD}}^+(\cdot\mid h)}
    [A^S(h,a)]
    =
    \frac{
    \eta\,\E_{a\sim T(\cdot\mid h)}[A^S(h,a)^2]
    }{
    1+\eta\,\E_{a\sim T(\cdot\mid h)}[A^S(h,a)]
    }
    \ge 0.
\end{equation}
\textbf{(ii)} We have the identity:
\(
\calR(\pi)-\calR(S)=
\E_{x,y\sim\pi}
\left[
\sum_{t=0}^{H-1}A^S(h_t,a_t)
\right]
\).
Consequently 
\(
\calR(S_{\mathrm{AOPD}}^+)\ge\calR(S)
\). The inequality is strict when a teacher-supported action has nonzero advantage at a history reached with positive probability under $S_{\mathrm{AOPD}}^+$.
\end{theorem}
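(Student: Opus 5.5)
The plan is to exploit the fact that both objectives, and the proximal penalty, decompose over histories. Since the history distribution $h_t\sim S$ and $A^S$ are frozen, the maximization in \Cref{eq:theory-prox} splits into independent problems over the simplex, one for each reachable history $h$. The per-history problem is
\[
\max_{\pi(\cdot\mid h)}\ \sum_a c(a)\log\pi(a\mid h)\;+\;\frac{1}{\eta}\sum_a S(a\mid h)\log\pi(a\mid h),
\]
after dropping the $\theta$-free entropy of $S$ and the common factor $\tfrac1H\,\mathbb{P}_S(h)$. Here $c(a)=T(a\mid h)$ for FKL and $c(a)=T(a\mid h)A^S(h,a)$ for AOPD.

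The objective is concave in $\pi$, so a Lagrange multiplier for $\sum_a\pi(a\mid h)=1$ gives the closed form $\pi(a\mid h)\propto \eta c(a)+S(a\mid h)$. This requires the right-hand side to be nonnegative, and that is where ``sufficiently small $\eta$'' enters: since $|A^S|\le 1$, it suffices to take $\eta<\min_{a:T(a)>0} S(a\mid h)/T(a\mid h)$ on the support. Absolute continuity of $T$ with respect to $S$ is assumed as in \Cref{lem:AOPD_unbiased}.

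Normalizing the two cases:
\begin{itemize}
\item For FKL, $S^+_{\mathrm{FKL}}=(S+\eta T)/(1+\eta)$.
\item For AOPD, $S^+_{\mathrm{AOPD}}=(S+\eta TA^S)/Z$ with $Z=1+\eta\,\E_T[A^S]$.
\end{itemize}
Taking the expectation of $A^S(h,\cdot)$ under each policy and using $\E_{S}[A^S(h,\cdot)]=0$ yields \Cref{eq:change_FKL} and \Cref{eq:change_AOPD} immediately. The denominator $Z$ is positive because $Z$ equals the total mass of the nonnegative vector $S+\eta TA^S$, which gives the $\ge 0$ claim. For the indeterminacy under OPD, it suffices to note that $\E_T[A^S]$ can be negative, and to point to \Cref{cor:atd-hybrid}.

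For (ii), I would prove the performance-difference identity by telescoping. Write $V^S(h_0)$ and use $V^S(h_H)=R(x,y)$ at terminal histories. Then
\[
R(x,y)-V^S(h_0)=\sum_t\bigl(V^S(h_{t+1})-V^S(h_t)\bigr).
\]
With the deterministic transition $h_{t+1}=(h_t,a_t)$, we have $Q^S(h_t,a_t)=V^S(h_{t+1})$, so each summand equals $A^S(h_t,a_t)$. Taking expectation over $y\sim\pi$ and then over $x$ gives $\calR(\pi)-\calR(S)$. Applying this with $\pi=S^+_{\mathrm{AOPD}}$ and the tower rule, each term becomes $\E_{h_t\sim S^+}\bigl[\E_{a\sim S^+(\cdot\mid h_t)}A^S(h_t,a)\bigr]$. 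Part (i) shows this is $\ge 0$ at every history, including histories reached under $S^+$, provided these are reachable under $S$; they are, since $S^+\ll S$.

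For strictness, take a history $h$ reached with positive probability under $S^+$ and an action $a$ with $T(a\mid h)>0$ and $A^S(h,a)\ne0$. Then $\E_T[(A^S)^2]>0$, so that term is strictly positive.

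The main obstacle I expect is the careful bookkeeping in (i). One issue is justifying positivity of the closed-form maximizer, i.e.\ the choice of $\eta$ uniformly over the finitely many reachable histories; for the AOPD case with negative advantages, this needs a KKT treatment if $\eta$ is not small. The other is making the decomposition legitimate: it requires that the constraint and objectives share the same weighting over $h_t\sim S$, so that the per-history problems really do decouple.
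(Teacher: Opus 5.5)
Your proposal is correct and follows essentially the same route as the paper. Both decouple the proximal objective by history and derive the closed forms $S^+_{\mathrm{FKL}}=(S+\eta T)/(1+\eta)$ and $S^+_{\mathrm{AOPD}}=(S+\eta T A^S)/(1+\eta\,\E_{T}[A^S])$; both then use $\E_{a\sim S}[A^S(h,a)]=0$, and for (ii) the telescoping identity $Q^S(h_t,a_t)=V^S(h_{t+1})$.

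Two of your details are sharper than the paper's. Your step-size bound $\eta<\min_{a:T(a\mid h)>0}S(a\mid h)/T(a\mid h)$, via $|A^S|\le 1$, makes concrete what the paper only calls ``sufficiently small $\eta$''. Your observation that $S^+_{\mathrm{AOPD}}\ll S$, so every history the update reaches is reached by $S$, replaces the paper's appeal to full support.
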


The distinction follows directly from the two update expressions. FKL shifts probability toward the teacher’s actions irrespective of whether those actions improve the current student. Its resulting reward change therefore depends on the teacher’s average advantage under the student, which may be negative even when the teacher achieves a higher overall reward. In contrast, AOPD weights teacher actions by their student advantages, yielding a squared-advantage term and thereby ensuring a nonnegative local reward change.

The small-step condition guarantees that the AOPD proximal update remains a valid probability distribution. In particular, the update cannot remove more probability mass from a negative-advantage action than the current policy assigns to that action. Appendix~\ref{app:proofs} provides the exact proximal solutions and the associated condition on \(\eta\).
\begin{corollary}[Surpassing the teacher]
\label{cor:atd-teacher-crossing}
Suppose $S=T$. For a sufficiently small $\eta>0$, we have
\(
\calR(S_{\mathrm{AOPD}}^+)\ge \calR(T)
\). Therefore, AOPD can still improve the policy and surpass the teacher. Hence, the teacher acts as a source of action-level supervision rather than a fixed target. 
\end{corollary}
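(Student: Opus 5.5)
The plan is to derive the corollary directly from \Cref{thm:atd-improvement} by specializing it to $T=S$. The key observation is that when the teacher coincides with the frozen student, the teacher's average advantage vanishes identically. By the definition in \Cref{eq:student_values}, for every reachable history $h$,
\[
\E_{a\sim T(\cdot\mid h)}[A^S(h,a)]=\E_{a\sim S(\cdot\mid h)}[Q^S(h,a)]-V^S(h)=0 .
\]
Substituting this into \Cref{eq:change_AOPD}, the denominator equals $1$, and the expected advantage under the AOPD update becomes
\[
\E_{a\sim S_{\mathrm{AOPD}}^+(\cdot\mid h)}[A^S(h,a)]=\eta\,\E_{a\sim S(\cdot\mid h)}[A^S(h,a)^2]\ \ge 0 .
\]
The step-size condition from \Cref{thm:atd-improvement} must still be checked, namely that $\eta$ is small enough for the exact proximal update to be a valid distribution. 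Since the theorem's condition is stated for arbitrary $T$, I will simply take $\eta$ below the threshold given by the appendix for the particular choice $T=S$. Nothing new is needed here, because $A^S$ is bounded in $[-1,1]$ when $R\in[0,1]$.

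Next, I will invoke part (ii) of \Cref{thm:atd-improvement}, the performance-difference identity, with $\pi=S_{\mathrm{AOPD}}^+$. Summing the per-history nonnegative terms along trajectories drawn from $S_{\mathrm{AOPD}}^+$ gives $\calR(S_{\mathrm{AOPD}}^+)-\calR(S)\ge 0$. Because $S=T$, this is exactly $\calR(S_{\mathrm{AOPD}}^+)\ge\calR(T)$.

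To justify the claim that AOPD ``can still improve and surpass the teacher,'' I will use the strictness clause of the theorem. The inequality is strict as soon as some action in the support of $T=S$ has nonzero advantage at a history reachable under $S_{\mathrm{AOPD}}^+$, that is, whenever $S$ is not already advantage-flat (locally optimal) on its own support. For contrast, I will note that \Cref{eq:change_FKL} yields $\frac{\eta}{1+\eta}\cdot 0=0$ in this case. FKL-OPD therefore leaves $S$ as a fixed point, which is what makes the teacher act as a source of action-level supervision in AOPD rather than a target.

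I do not expect a real obstacle, since the argument is a direct specialization. The only delicate point is the strictness and reachability bookkeeping: histories reached under $S_{\mathrm{AOPD}}^+$ must have positive probability under $S$ for $A^S$ to be the quantity the update acted on. This holds because the update is a reweighting of $S$ and so does not enlarge its support.
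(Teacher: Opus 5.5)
Your proposal is correct and follows the paper's proof. You specialize \Cref{thm:atd-improvement} to $T=S$, obtain $\calR(S_{\mathrm{AOPD}}^+)\ge\calR(S)=\calR(T)$, take strictness from the theorem's strictness clause, and contrast this with the FKL fixed point $S_{\mathrm{FKL}}^+=T$. Your extra observation that $\E_{a\sim T}[A^S(h,a)]=0$ sets the denominator to $1$, so the step-size condition reduces to $\eta<1$ because $A^S\in[-1,1]$; this is a harmless refinement that makes the paper's argument slightly more explicit.
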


\vspace{-6pt}
\paragraph{Reward improvement with clipping.}
The proof of \Cref{thm:atd-improvement} does not require the teacher weights to form a normalized distribution. It only requires them to be nonnegative and that the proximal coefficients  remains positive. The same argument therefore applies to the truncated weights
\(
\widetilde T(a\mid h)=\min\{T(a\mid h),cS(a\mid h)\}.
\)

\begin{corollary}[Reward improvement with clipping]
\label{cor:atd-clipping}
Under the assumptions of \Cref{thm:atd-improvement}, let $S_c^+$ be the exact proximal update for the expected clipped objective in \Cref{eq:AOPD_clipped}. Let $c\ge1$ and $0<\eta<1/c$, we have:
\(
\calR(S_c^+)\ge\calR(S).
\)
\end{corollary}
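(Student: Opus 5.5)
The plan is to redo the proximal computation behind \Cref{thm:atd-improvement}, using the truncated weights $\widetilde T$ in place of $T$.

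\textbf{Step 1: expected clipped objective.} Compute the expectation of \Cref{eq:AOPD_clipped}, repeating the importance-sampling argument of \Cref{lem:AOPD_unbiased}. For the first term, $\E_{a\sim S}[\widetilde w\,(R-b)\log\pi(a\mid h)]$ becomes $\sum_a \widetilde T(a\mid h)(Q^S(h,a)-b)\log\pi(a\mid h)$, since $S(a\mid h)\min\{T/S,c\}=\widetilde T(a\mid h)$. For the second term, conditioning on $h_t$ gives $(V^S(h_t)-b)\sum_a\widetilde T(a\mid h_t)\log\pi(a\mid h_t)$. The baseline $b$ cancels between the two terms. The expected clipped objective is therefore
\[
\frac1H\sum_t \E_{h_t\sim S}\Bigl[\sum_a \widetilde T(a\mid h_t)\,A^S(h_t,a)\log\pi(a\mid h_t)\Bigr],
\]
which is \Cref{eq:step_AOPD} with $T$ replaced by the unnormalized nonnegative measure $\widetilde T$.

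\textbf{Step 2: solve the proximal problem per history.} The proximal problem decouples across histories, because both the objective and the KL term are averages over the same $h_t\sim S$. At each $h$ we maximize
\[
\sum_a \widetilde T(a)A(a)\log\pi(a)+\frac1\eta\sum_a S(a)\log\pi(a)
\]
subject to $\sum_a\pi(a)=1$. This is concave in $\pi$. Stationarity gives
\[
\pi(a)=\frac{S(a)+\eta\widetilde T(a)A(a)}{\lambda},
\]
and normalization yields $\lambda=1+\eta\sum_a\widetilde T(a)A(a)$. This formula is valid only if every numerator is positive.

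\textbf{Step 3: verify positivity for $0<\eta<1/c$.} This is the main point where clipping enters. Since $R\in[0,1]$, we have $|A^S|\le1$, and clipping gives $\widetilde T(a)\le cS(a)$. Hence
\[
S(a)+\eta\widetilde T(a)A(a)\ \ge\ S(a)(1-\eta c)\ >\ 0
\]
whenever $S(a)>0$. If $S(a)=0$, then $\widetilde T(a)=0$ and $\pi(a)=0$. Summing the numerators also gives $\lambda>0$. So the stationary point is the exact maximizer, and no small-$\eta$ condition on the unclipped ratio is needed.

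\textbf{Step 4: local advantage and reward improvement.} Compute the local advantage using $\sum_a S(a)A^S(h,a)=0$:
\[
\E_{a\sim S_c^+}[A^S(h,a)]=\frac{\eta\sum_a\widetilde T(a)A(a)^2}{1+\eta\sum_a\widetilde T(a)A(a)}\ \ge0 .
\]
Then invoke the performance-difference identity in \Cref{thm:atd-improvement}(ii):
\[
\calR(S_c^+)-\calR(S)=\E_{y\sim S_c^+}\Bigl[\sum_t \E_{a\sim S_c^+(\cdot\mid h_t)}A^S(h_t,a)\Bigr]\ \ge0 .
\]
The identity applies because it holds for an arbitrary policy $\pi$.

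The main obstacle is Step 1: making sure the second term's truncation uses the same $\widetilde T$ as the first term, so that $b$ cancels and the weights align into $\widetilde T\,A^S$. This holds exactly because the clipping was applied identically to both terms.
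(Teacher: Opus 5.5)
Your proposal is correct and follows the paper's proof essentially step for step. You substitute $\widetilde T$ for $T$ in the AOPD proximal solution, bound the numerators below by $S(a\mid h)(1-\eta c)>0$ using $|A^S|\le 1$ and $\widetilde T\le cS$, compute the nonnegative local advantage via $\E_{a\sim S}[A^S(h,a)]=0$, and conclude with the performance-difference identity. Your Step 1 checks that the expected clipped estimator is exactly the AOPD objective with $\widetilde T$ in place of $T$ (since $S\,\widetilde w=\widetilde T$ and the baseline cancels); this makes explicit a step the paper only asserts.
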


\vspace{-6pt}
\paragraph{Adding RL does not resolve the imitation conflict.}
An alternative way to incorporate terminal rewards into distillation is to augment the FKL distillation objective with an explicit reward-maximization term:
\begin{equation}
\label{eq:theory-hybrid-objective}
J_{\mathrm{mix}}(\pi)
=
\frac{1}{H} \calR(\pi)+\lambda J_{\mathrm{FKL}}(\pi),
\qquad \lambda>0
\end{equation}
Where $H$ is the horizon and is added to simply puts reward and FKL on the same per-token scale. This objective is studied in previous work such KDRL \citep{xu2025kdrl,zhang2026reinforcement}. The reward term favors successful behavior, but the FKL term still moves the student toward the teacher independently of the student advantage. The two objectives can therefore conflict as we show below:

\begin{lemma}[RL plus FKL-OPD need not improve reward]
\label{cor:atd-hybrid}
For every fixed $\lambda>0$, there exists a two-step task with binary terminal rewards and full-support policies $S,T$ satisfying $\calR(T)>\calR(S)$ such that, for all sufficiently small $\eta>0$,
\[
\calR(S_{\mathrm{mix}}^+)<\calR(S), \quad \calR(S_{\mathrm{OPD}}^+)<\calR(S)
\]
\end{lemma}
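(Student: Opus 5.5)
The plan is to build an explicit two-step example and show that, for every fixed $\lambda$, the first-order-in-$\eta$ reward change of both proximal updates is strictly negative. Both claims then hold for all sufficiently small $\eta$.

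\textbf{Task.} Take $H=2$ and a single prompt. At the root $h_0$ there are two actions:
\begin{itemize}
\item $a$ leads to a ``hard'' history $h_a$;
\item $b$ leads to an ``easy'' history $h_b$.
\end{itemize}
At $h_a$ and at $h_b$ there are two final tokens, ``correct'' (reward $1$) and ``wrong'' (reward $0$). The policies are chosen as follows:
\begin{itemize}
\item the student picks $a$ with probability $\alpha$;
\item the student succeeds at $h_a$ with probability $q$ and at $h_b$ with probability $p$, where $q=p-\delta<p<1$;
\item the teacher picks $a$ with probability $1-\epsilon_T$;
\item at $h_a$ the teacher is correct with probability $1-\epsilon_T$;
\item at $h_b$ the teacher equals the student.
\end{itemize}
All probabilities lie strictly in $(0,1)$, so both policies have full support. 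For small $\epsilon_T$ we get $\calR(T)\approx 1>\alpha q+(1-\alpha)p=\calR(S)$.

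\textbf{Advantages.} The student advantages are:
\begin{itemize}
\item at the root, $A^S(h_0,a)=(1-\alpha)(q-p)=-(1-\alpha)\delta$;
\item at $h_a$, $A^S(h_a,\text{correct})=1-q$;
\item the teacher's expected advantage vanishes at $h_b$.
\end{itemize}

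\textbf{First-order reward change.} Part (ii) of \Cref{thm:atd-improvement} gives $\calR(\pi)-\calR(S)=\E_{\pi}[\sum_t A^S(h_t,a_t)]$. Writing $\pi=S+\eta D+O(\eta^2)$, we have $\E_{S(\cdot\mid h)}[A^S(h,\cdot)]=0$ at every $h$. Hence the change in the history distribution contributes only at order $\eta^2$, and
\[
\calR(\pi)-\calR(S)=\eta\sum_t\E_{h_t\sim S}\Big[\sum_{a} D(a\mid h_t)A^S(h_t,a)\Big]+O(\eta^2).
\]

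For OPD, \Cref{eq:change_FKL} gives $\sum_a D A^S=\E_T[A^S]$ at each history. Summing over the two steps with the student's visitation weights, the first-order change is
\[
\eta\Big[(1-\epsilon_T)\big(-(1-\alpha)\delta\big)+\alpha(1-\epsilon_T)(1-q)\Big]+O(\eta\,\epsilon_T).
\]
This is negative once $\alpha(1-q)<(1-\alpha)\delta$, for instance when $\alpha=\delta^2$ and $\delta$ is small.

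For the mixed objective, I would derive the stationarity condition of the proximal problem \Cref{eq:theory-prox} with $J_{\mathrm{mix}}$. Since $\calR$ is nonlinear in $\pi$, I would linearize it at $S$. Its gradient with respect to $\pi(\cdot\mid h)$ is $d^S(h)\,Q^S(h,\cdot)$, where $d^S$ is the student's visitation probability. The first-order direction is then
\[
D(\cdot\mid h)=\lambda\,(T-S)(\cdot\mid h)+\tfrac{1}{H}\,d^S(h)\,S(\cdot\mid h)\,A^S(h,\cdot)\,\kappa(h),
\]
where $\kappa(h)>0$ is a normalization factor from the per-history KL weighting. Inserting $D$ into the first-order formula, the RL part contributes a nonnegative term of size $O\big(\sum_h d^S(h)^2\,\mathrm{Var}_S A^S\big)$. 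Here that is $O\big(\alpha\delta^2+\alpha^2 q(1-q)+(1-\alpha)^2p(1-p)\cdot 0\big)=O(\delta^4)$ with $\alpha=\delta^2$. The $\lambda$-part contributes $-\lambda\delta\,(1+o(1))$. Choosing $\delta$ small enough relative to $\lambda$ (say $\delta^3\ll\lambda$) makes the total negative.

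\textbf{Main obstacle.} I expect the hardest step to be making the first-order expansion of the \emph{exact} proximal update for $J_{\mathrm{mix}}$ rigorous, because $\calR(\pi)$ couples histories. My first approach would be the implicit function theorem. I would apply it to the KKT system in the finitely many variables $\pi(\cdot\mid h)$, which is smooth and nondegenerate at $\eta=0$ with solution $\pi=S$. This shows that $\eta\mapsto S^+_{\mathrm{mix}}$ is $C^1$ with the derivative $D$ above. Then $\frac{d}{d\eta}\calR(S^+_{\mathrm{mix}})\big|_{\eta=0}<0$, which gives strict decrease for all small $\eta$. The OPD claim uses the same argument without the reward term, together with the closed form from (i).
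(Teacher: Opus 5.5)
Your overall skeleton (first-order expansion of the proximal update, the performance-difference identity, an explicit two-step example) matches the paper's. Your OPD computation is also correct, provided you take $\epsilon_T\ll\delta$.

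The mixed-objective part, however, fails for the example as you parametrized it. The error is the factor $0$ you attach to the easy history $h_b$. The RL direction there is $S(\cdot\mid h_b)A^S(h_b,\cdot)$. It moves mass from ``wrong'' ($A^S=-p$) to ``correct'' ($A^S=1-p$) regardless of what the teacher does. It therefore raises the reward at first order by $d^S(h_b)\,\mathrm{Var}_S A^S(h_b,\cdot)=(1-\alpha)p(1-p)$; only the \emph{teacher} term vanishes at $h_b$.

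Separately, your $d^S(h)^2$ weighting is a bookkeeping slip. The KL penalty is itself an expectation over $h_t\sim S$, so $d^S(h)$ cancels from the stationarity condition. The direction is then exactly $S A^S+\lambda(T-S)$, and the RL contribution is $\sum_t\E_{h_t\sim S}\E_{a\sim S}[A^S(h_t,a)^2]$, as in the paper.

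With $p$ fixed, $\alpha=\delta^2$ and $\delta,\epsilon_T\to0$, the first-order coefficient of the mixed update therefore tends to $p(1-p)>0$. The harmful teacher term is only $-\lambda\delta(1+o(1))$. So for all sufficiently small $\delta$ your example actually exhibits reward \emph{improvement} for the mixed update, the opposite of what the lemma requires.

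To repair it, every student conditional must be nearly deterministic, so that all variance terms are small compared with the teacher term. For instance, $1-p\le\delta^2$ makes the RL contribution $O(\delta^2)$ against $-\lambda\delta$. You would then take $\epsilon_T=o(\delta^2)$ to keep $\calR(T)>\calR(S)$.

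The paper's example achieves this more cleanly by making the root advantage gap of order one rather than $\delta$. The student takes the difficult route with probability $\varepsilon$, succeeds there with probability $\varepsilon$, and succeeds with probability $1-\varepsilon$ on the safe route. The teacher takes the difficult route with probability $1-\varepsilon$ and succeeds with probability $1-\varepsilon$ on both routes. Then $\sum_t\E_{h_t\sim S}\E_{a\sim S}[A^S(h_t,a)^2]=\varepsilon(1-\varepsilon)[1+(1-2\varepsilon)^2]\to0$, while $\sum_t\E_{h_t\sim S}\E_{a\sim T}[A^S(h_t,a)]=(1-2\varepsilon)(3\varepsilon-1)\to-1$. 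The coefficient therefore tends to $-\lambda$, and the OPD claim follows from the same negative teacher sum.

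Your implicit-function-theorem plan for the exact proximal update is a sound way to make the expansion rigorous, more careful than the paper's formal $O(\eta^2)$ bookkeeping. But it only helps once the first-order coefficient is genuinely negative.
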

\noindent Therefore, adding an RL objective does not necessarily mitigate the harmful aspect of imitation.

\section{Conclusion}
We introduced AOPD, an on-policy distillation objective that asks not only what the teacher prefers, but whether that preference actually helps the current student succeed. By weighting teacher actions with their advantage under the student, AOPD reinforces useful guidance and suppresses harmful imitation, while requiring no additional counterfactual rollouts or auxiliary value models. Across external-teacher and self-distillation settings, AOPD consistently improves over typical distillation and RL baselines, showing that effective distillation should optimize for student success rather than teacher agreement.

\section*{Acknowledgment}
This research was supported by Coefficient Giving, EnCORE (the Institute for Emerging CORE Methods in Data Science), and NSF Award No. 2502489 through the Mathematical Foundations of Alignment in Generative Artificial Intelligence (MFAI) program. AJ was supported in part by  the Sloan fellowship in mathematics, the NSF Award DMS-2311024, an Amazon
Faculty Research Award, an Adobe Faculty Research Award and an iORB grant form USC Marshall School of Business.

\newpage
\bibliography{bibliography}
\bibliographystyle{plainnat}

\newpage
\appendix

\section{Additional Experiments}\label{app:additional_exp}
\paragraph{Group size effect on OPD and AOPD.} For our main experiments in \Cref{sec:experiments}, we choose group-size $G = 4$ for AOPD and OPD. This is because as \Cref{fig:group} shows, increasing the group size leads to only a negligible reward improvement. We also found that this reward improvement does not transfer to better benchmark number. The same effect is also studied for OPD in previous work \cite{hou2026uni}. 

\begin{figure*}[h!]
    \centering
    \begin{subfigure}{0.4\linewidth}
        \includegraphics[width=\linewidth]{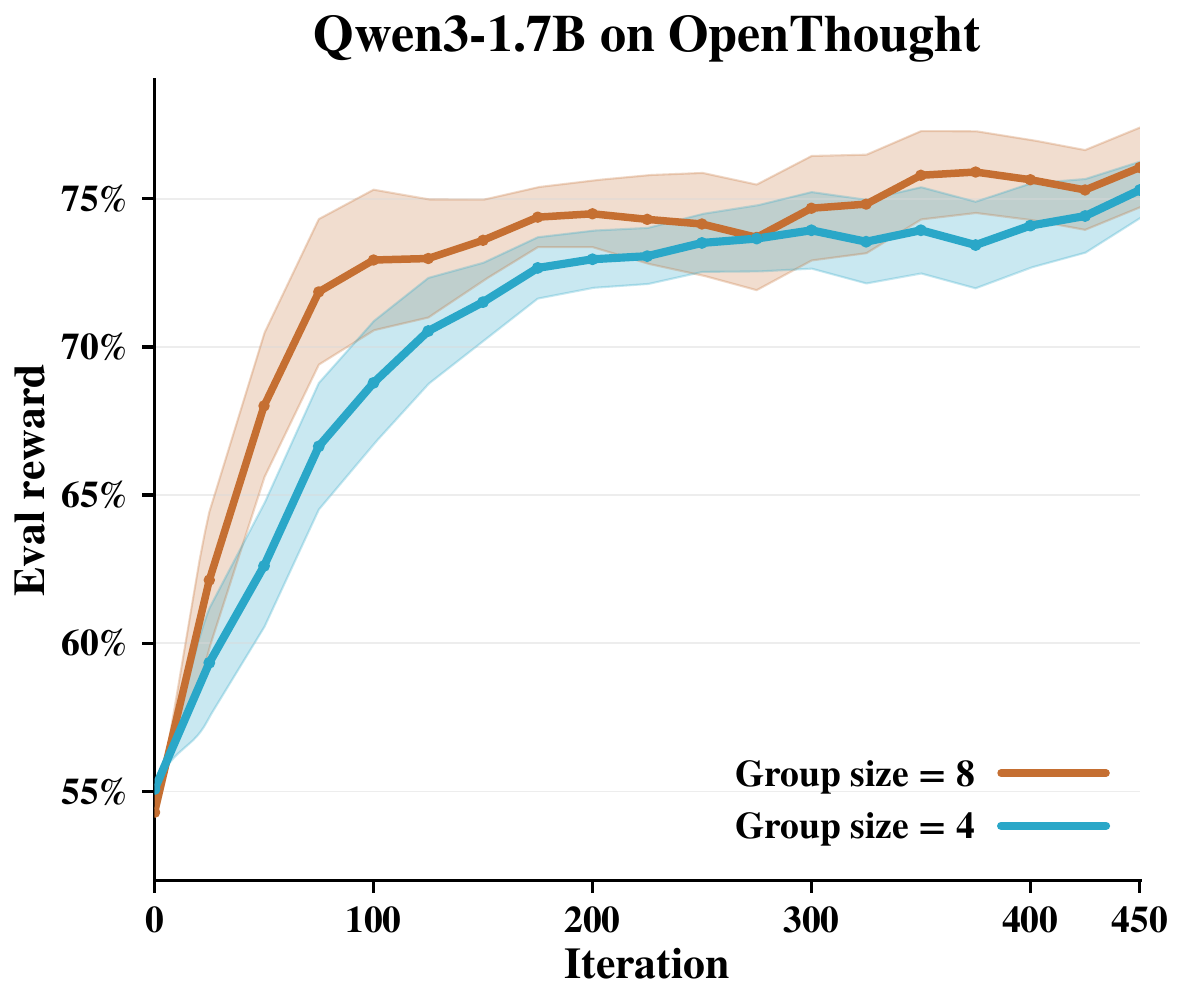}
    \end{subfigure}%
    \hspace{0.4in}
    \begin{subfigure}{0.4\linewidth}
        \includegraphics[width=\linewidth]{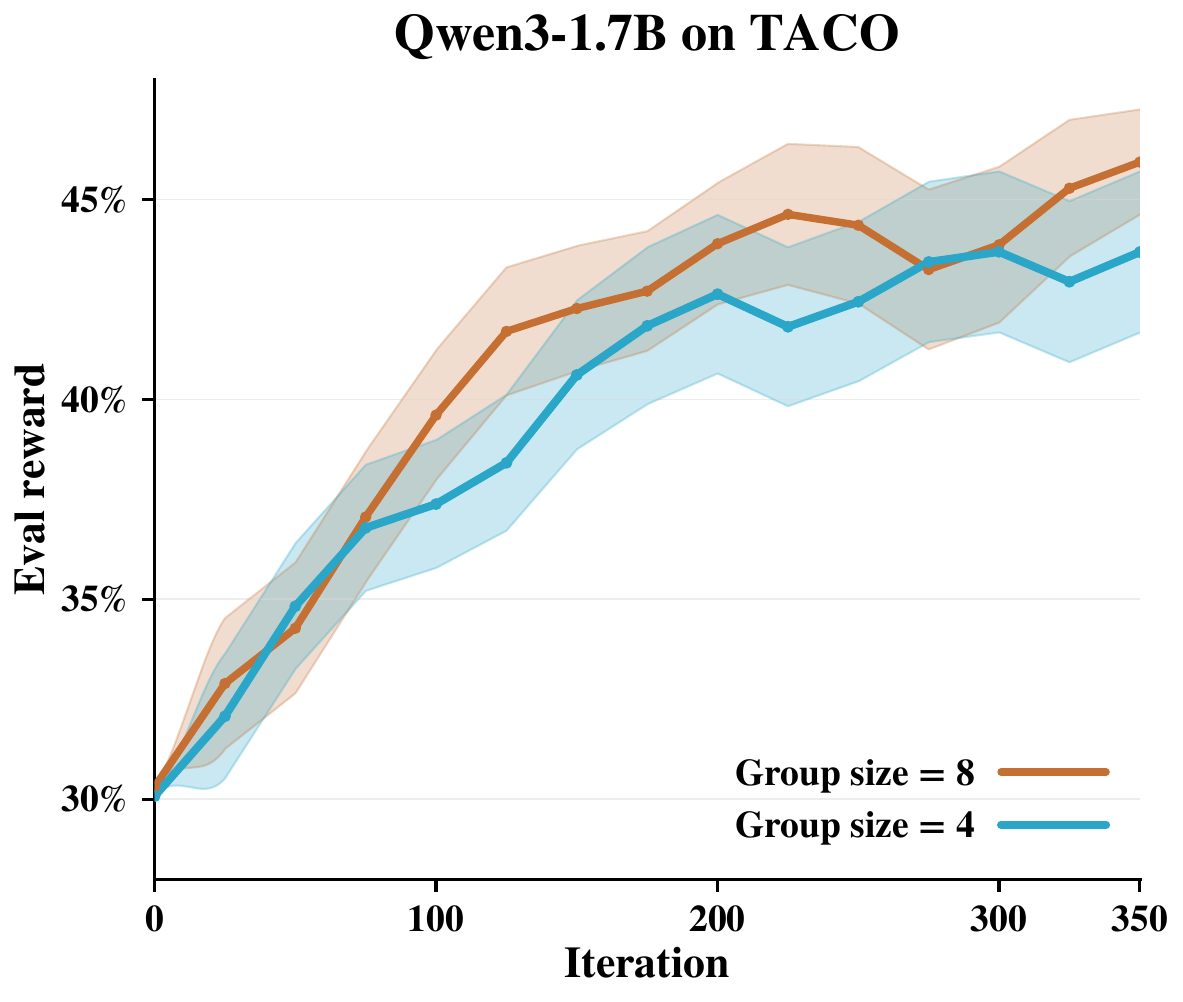}
    \end{subfigure}%
    \vspace{-10pt}
    \caption{Increasing the group size does not significantly improve the performance for AOPD. This is while for methods such GRPO, a larger group size is critical.}
    \label{fig:group}
\end{figure*}

\paragraph{Clipping avoids performance fluctuations.}
As explained in \Cref{sec:method}, we clip the importance sampling weights to control the high variance caused by large importance sampling weight, a standard technique in importance-sampling-based optimization \citep{munos2016safe,wang2016sample,han2019dimension}. This effect is visible in \Cref{fig:clipping}: without clipping, training exhibits substantially larger fluctuations, whereas clipping leads to a more stable optimization trajectory.

\begin{figure*}[h!]
    \centering
    \includegraphics[width=0.45\linewidth]{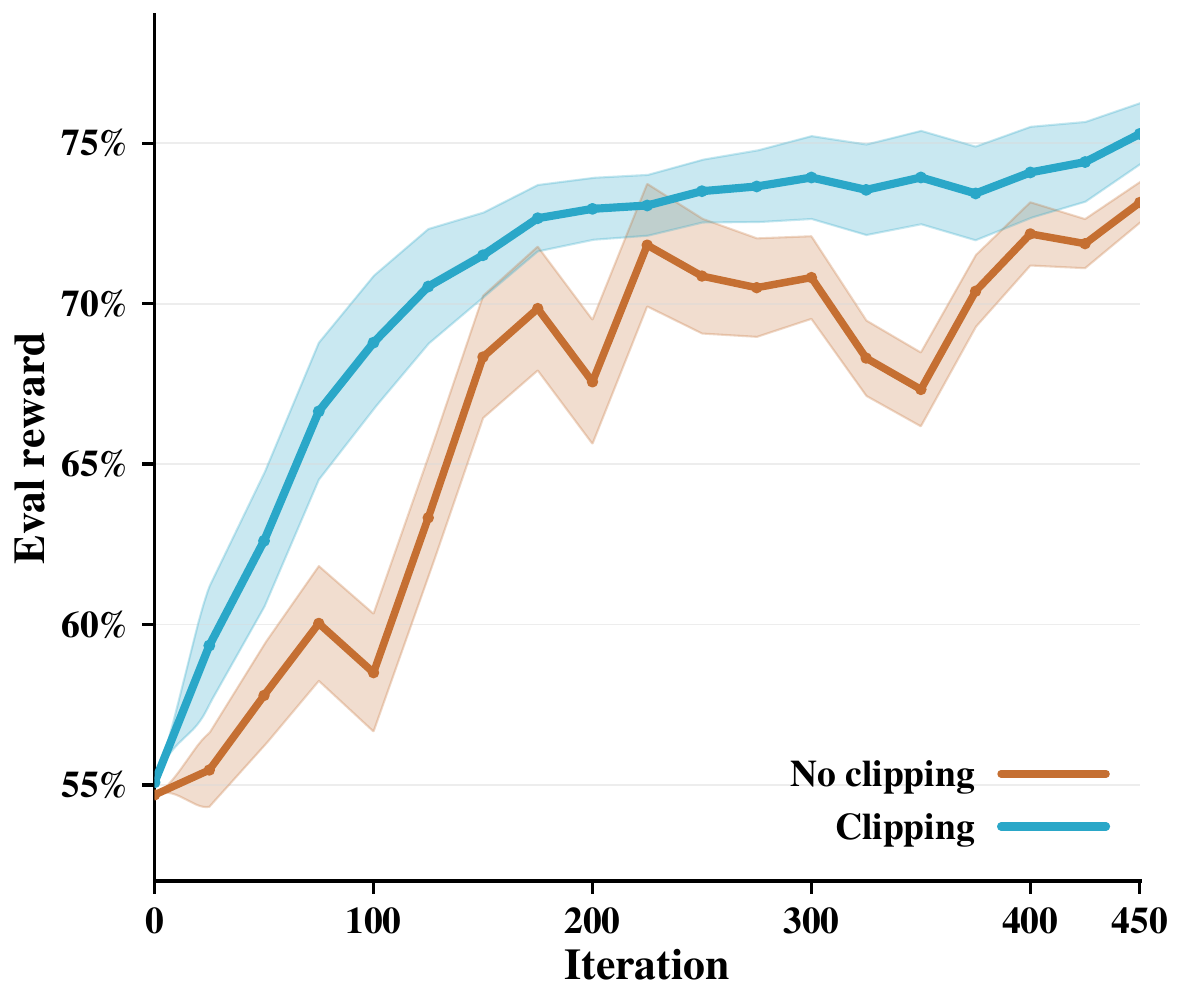}
    \caption{Clipping the importance sampling weight improves the stability and the performance of AOPD.}
    \vspace{-5pt}
    \label{fig:clipping}
\end{figure*}

\section{Proofs: Reward Improvement}
\label{app:proofs}

\begin{proof}[Proof of \Cref{lem:AOPD_unbiased}]
Fix a visited history $h_t=h$. The sampled response comes from $S$, and $b$ is independent of that response conditional on the prompt. Remember that:
\[
\E[R^S\mid h,a_t=a]=Q^S(h,a),
\qquad
\E[R^S\mid h]=V^S(h).
\]
Therefore we have: 
\begin{align*}
\E [
(R^S-b)w_t\log S_\theta(a_t\mid h)
\mid h
] & = \E\left[ \E[
(R^S-b)w_t\log S_\theta(a_t\mid h)
\mid h, a]
\right] \\
& = \E\left[
(Q^S(h, a) -b)w_t\log S_\theta(a_t\mid h)
\right] \\
& =
\sum_{a\in\calV}
S(a\mid h)\bigl(Q^S(h,a)-b\bigr)
\frac{T(a\mid h)}{S(a\mid h)}
\log S_\theta(a\mid h)\\
& =
\sum_{a\in\calV}
T(a\mid h)\bigl(Q^S(h,a)-b\bigr)
\log S_\theta(a\mid h)
\end{align*}
where the support assumption ($T \ll S$) lets us apply the Radon-Nikodym theorem in the last line. Meanwhile, $\E_{a\sim T} [\log S_\theta(a | h_t)]$ depends only on the history and in independent of $a$, so:
\begin{align*}
\E\left[(R^S-b) \E_{a\sim T} [\log S_\theta(a \mid h_t)] \mid h \right]
= &
\bigl(V^S(h)-b\bigr) \E_{a\sim T} [\log S_\theta(a \mid h_t)]\\
= &  \E_{a\sim T} \big[ \bigl(V^S(h)-b\bigr) \log S_\theta(a \mid h_t) \big]
\end{align*}
Subtracting the two expressions cancels the baseline:
\begin{align*}
&\E\left[
(R^S-b)\Big(w_t\log S_\theta(a_t\mid h)-\E_{a\sim T} [\log S_\theta(a \mid h_t)]\Big)
\mid h,b
\right]\\
&\quad=
\sum_{a\in\calV}
T(a\mid h)\bigl(Q^S(h,a)-V^S(h)\bigr)
\log S_\theta(a\mid h)
\end{align*}
Averaging over the baseline, histories, and token positions yields $J_{\mathrm{AOPD}}(\theta)$. Differentiating this identity under the stated conditions gives the gradient result since the expectation and the gradient operations can be swapped.
\end{proof}

\begin{proof}[Proof of \Cref{thm:atd-improvement}]
We first prove \Cref{eq:change_FKL,eq:change_AOPD}. Then we show a general identity is true: $\E_{a \sim \pi}[A^S(h, a)] = \calR(\pi) - \calR(S)$, and use it to prove the final statement.

\textbf{(i)}

\paragraph{Forward-KL.}
Recall that FKL optimizes the objective in \Cref{eq:fkl_max}. Replacing the updated student $S_\theta$ by the candidate policy $\pi$, the proximal objective in \Cref{eq:theory-prox} becomes
\begin{align*}
J_{\mathrm{FKL}}(\pi)-\frac{1}{\eta}\KL(S\|\pi)
&=
\E_x\left[
\frac{1}{H}\sum_{t=0}^{H-1}
\E_{h_t\sim S(\cdot)}
\left[
\sum_{a\in\calV}
T(a\mid h_t)\log\pi(a\mid h_t)
\right.\right.\\
&\qquad\qquad\left.\left.
-\frac{1}{\eta}
\sum_{a\in\calV}
S(a\mid h_t)
\log\frac{S(a\mid h_t)}{\pi(a\mid h_t)}
\right]
\right]\\
&=
\E_x\left[
\frac{1}{H}\sum_{t=0}^{H-1}
\E_{h_t\sim S(\cdot)}
\left[
\sum_{a\in\calV}
\left(
T(a\mid h_t)+\frac{1}{\eta}S(a\mid h_t)
\right)
\log\pi(a\mid h_t)
\right.\right.\\
&\qquad\qquad\left.\left.
-\frac{1}{\eta}
\sum_{a\in\calV}
S(a\mid h_t)\log S(a\mid h_t)
\right]
\right].
\end{align*}
We can drop the last term since it is independent of $\pi$. Then, this can be optimized for each $h$ separately. Therefore, fixing a history $h$, the relevant objective is:
\begin{equation}\label{eq:FKL_simplified}
    \frac{1}{\eta}
\sum_{a\in\calV}
\left[
S(a\mid h)+\eta T(a\mid h)
\right]
\log\pi(a\mid h)
\end{equation}
The optimization separates across histories. Since:
\[
\sum_{a\in\calV}
\left[S(a\mid h)+\eta T(a\mid h)\right]
=
1+\eta
\]
maximizing over $\pi(\cdot\mid h)$ subject to $\sum_a\pi(a\mid h)=1$ gives:
\[
S_{\mathrm{FKL}}^+(a\mid h)
=
\frac{S(a\mid h)+\eta T(a\mid h)}{1+\eta}.
\]

By definition of the student advantage:
\[
\E_{a\sim S(\cdot\mid h)}[A^S(h,a)]
=
\E_{a\sim S}[Q^S(h,a)]-V^S(h)
=
0.
\]
Therefore:
\begin{align*}
\E_{a\sim S_{\mathrm{FKL}}^+(\cdot\mid h)}
[A^S(h,a)]
&=
\frac{
\E_{a\sim S}[A^S(h,a)]
+
\eta\E_{a\sim T}[A^S(h,a)]
}{1+\eta}\\
&=
\frac{\eta}{1+\eta}
\E_{a\sim T}[A^S(h,a)].
\end{align*}
Thus, the reward effect of the FKL update depends on the teacher's average advantage under the current student, which can have either sign.

\paragraph{AOPD.}
For AOPD, substituting \Cref{eq:step_AOPD} into the same proximal objective gives
\begin{align*}
J_{\mathrm{AOPD}}(\pi)-\frac{1}{\eta}\KL(S\|\pi)
&=
\E_x\left[
\frac{1}{H}\sum_{t=0}^{H-1}
\E_{h_t\sim S(\cdot)}
\left[
\sum_{a\in\calV}
T(a\mid h_t)A^S(h_t,a)\log\pi(a\mid h_t)
\right.\right.\\
&\qquad\qquad\left.\left.
-\frac{1}{\eta}
\sum_{a\in\calV}
S(a\mid h_t)
\log\frac{S(a\mid h_t)}{\pi(a\mid h_t)}
\right]
\right].
\end{align*}
Similar to \Cref{eq:FKL_simplified}, dropping terms independent of $\pi$, the objective at a fixed history $h$ becomes:
\[
\frac{1}{\eta}
\sum_{a\in\calV}
\left[
S(a\mid h)
+
\eta T(a\mid h)A^S(h,a)
\right]
\log\pi(a\mid h).
\]
For sufficiently small $\eta$, all coefficients are positive. Their sum is
\begin{align*}
\sum_{a\in\calV}
\left[
S(a\mid h)+\eta T(a\mid h)A^S(h,a)
\right]
&=
1+\eta\sum_{a\in\calV}T(a\mid h)A^S(h,a)\\
&=
1+\eta\E_{a\sim T}[A^S(h,a)].
\end{align*}
Hence the proximal optimizer is
\begin{align}\label{eq:proximal-S}
S_{\mathrm{AOPD}}^+(a\mid h)
=
\frac{
S(a\mid h)+\eta T(a\mid h)A^S(h,a)
}{
1+\eta\E_{a\sim T}[A^S(h,a)]
}.
\end{align}
Its expected student advantage is therefore
\begin{align*}
\E_{a\sim S_{\mathrm{AOPD}}^+(\cdot\mid h)}
[A^S(h,a)]
&=
\frac{
\E_{a\sim S}[A^S(h,a)]
+
\eta\E_{a\sim T}[A^S(h,a)^2]
}{
1+\eta\E_{a\sim T}[A^S(h,a)]
}\\
&=
\frac{
\eta\E_{a\sim T}[A^S(h,a)^2]
}{
1+\eta\E_{a\sim T}[A^S(h,a)]
}
\ge 0.
\end{align*}
The denominator is positive by the small-step condition, while the numerator is nonnegative because it is an expectation of squared advantages.

\paragraph{(ii) From local advantage to final reward.}
We finally show that the nonnegative advantage at each history implies a nonnegative change in the final task reward. Recall that $A^S$ is defined using continuations under the original student $S$, but it can be evaluated at any history. This is because we assumed $S$ has full support and the sufficiently small proximal update preserves this support.
For any history-action pair:
\[
Q^S(h_t,a_t)=V^S(h_{t+1}),
\]
where $V^S(h_H)=R(x,y)$ at the terminal history. Thus, along any complete response, regardless of the policy that generated it,
\begin{align*}
\sum_{t=0}^{H-1}A^S(h_t,a_t)
&=
\sum_{t=0}^{H-1}
\left(Q^S(h_t,a_t)-V^S(h_t)\right)\\
&=
\sum_{t=0}^{H-1}
\left(V^S(h_{t+1})-V^S(h_t)\right)\\
&=
R(x,y)-V^S(x).
\end{align*}
Taking the expectation over responses generated by any policy $\pi$, and using
\[
\E_x[V^S(x)]=\mathcal R(S),
\]
gives
\begin{align*}
\mathcal R(\pi)-\mathcal R(S)
&=
\E_{x,y\sim\pi}
\left[
\sum_{t=0}^{H-1}A^S(h_t,a_t)
\right]\\
&=
\sum_{t=0}^{H-1}
\E_{h_t\sim\pi(\cdot)}
\left[
\E_{a\sim\pi(\cdot\mid h_t)}
[A^S(h_t,a)]
\right].
\end{align*}
Setting $\pi=S_{\mathrm{AOPD}}^+$, every inner expectation is nonnegative by the local result above. Therefore,
\[
\mathcal R(S_{\mathrm{AOPD}}^+)\ge\mathcal R(S).
\]
The inequality is strict whenever $S_{\mathrm{AOPD}}^+$ reaches with positive probability a history $h$ for which $T(a\mid h)>0$ and $A^S(h,a)\neq0$ for some action $a$.

\end{proof}

\begin{proof}[Proof of \Cref{cor:atd-clipping}]
Replacing $T$ by $\widetilde T$ in the AOPD proximal objective (see~\eqref{eq:proximal-S}) gives
\[
S_c^+(a\mid h)
=
\frac{
S(a\mid h)+\eta\widetilde T(a\mid h)A^S(h,a)
}{
1+\eta\sum_b\widetilde T(b\mid h)A^S(h,b)
}.
\]
Since $R\in[0,1]$, we have $A^S(h,a)\in[-1,1]$. Moreover,
\[
\widetilde T(a\mid h)\le cS(a\mid h),
\]
so for $\eta<1/c$,
\[
S(a\mid h)+\eta\widetilde T(a\mid h)A^S(h,a)
\ge
S(a\mid h)(1-\eta c)>0.
\]
Therefore the update is well defined. Using $\E_{a\sim S}[A^S(h,a)]=0$,
\[
\E_{a\sim S_c^+}[A^S(h,a)]
=
\frac{
\eta\sum_a\widetilde T(a\mid h)A^S(h,a)^2
}{
1+\eta\sum_a\widetilde T(a\mid h)A^S(h,a)
}
\ge0.
\]
Applying the same performance-difference argument as in \Cref{thm:atd-improvement} gives
\[
\mathcal R(S_c^+)\ge\mathcal R(S).
\]
\end{proof}

\subsection{Surpassing the teacher}
\begin{proof}[Proof of \Cref{cor:atd-teacher-crossing}]
Setting $S=T$ in \Cref{thm:atd-improvement} gives
\[
\mathcal R(S_{\mathrm{AOPD}}^+)\ge\mathcal R(S)=\mathcal R(T).
\]
The strict-improvement condition follows directly from the theorem. Meanwhile, the FKL update becomes
\[
S_{\mathrm{FKL}}^+
=
\frac{T+\eta T}{1+\eta}
=
T.
\]
\end{proof}

This follows directly from \Cref{thm:atd-improvement} by setting $S=T$. In contrast, FKL makes no update once the student matches the teacher. AOPD can therefore continue improving beyond the teacher by reweighting its actions according to their task advantage.

\subsection{Adding RL does not guarantee improvement}
\begin{proof}[Proof of \Cref{cor:atd-hybrid}]
In order to prove that FKL + RL (and FKL) can reduce the reward after the update, we first expand the updated policy $pi$ for infinitesimal $\eta$. Under this setting, we write down the change in the student reward. We then construct a counter-example that the update reward is smaller, even though the teacher is stronger itself and has a high reward.

\paragraph{The hybrid update.}
Recall that
\[
J_{\mathrm{mix}}(\pi)
=
\frac{1}{H}\mathcal R(\pi)
+
\lambda J_{\mathrm{FKL}}(\pi).
\]
Its proximal update therefore maximizes
\[
\frac{1}{H}\mathcal R(\pi)
+
\lambda J_{\mathrm{FKL}}(\pi)
-
\frac{1}{\eta}\KL(S\|\pi).
\]
For sufficiently small $\eta$, the KL penalty keeps the updated policy close to $S$, so we write
\[
\pi(a\mid h)
=
S(a\mid h)+\eta v(a\mid h)+O(\eta^2),
\qquad
\sum_a v(a\mid h)=0.
\]

\noindent We now expand the three terms around $S$. From the performance-difference identity and $\E_{a\sim S}[A^S(h,a)]=0$:
\[
\frac{1}{H}\bigl(\mathcal R(\pi)-\mathcal R(S)\bigr)
=
\frac{\eta}{H}
\sum_t
\E_{h_t\sim S(\cdot)}
\left[
\sum_a v(a\mid h_t)A^S(h_t,a)
\right]
+O(\eta^2)
\]
Similarly, using $\log(1+z)=z+O(z^2)$,
\begin{align*}
J_{\mathrm{FKL}}(\pi)-J_{\mathrm{FKL}}(S)
&=
\frac{\eta}{H}
\sum_t
\E_{h_t\sim S(\cdot)}
\left[
\sum_a
\frac{T(a\mid h_t)}{S(a\mid h_t)}
v(a\mid h_t)
\right]
+O(\eta^2),\\
\KL(S\|\pi)
&=
\frac{\eta^2}{2H}
\sum_t
\E_{h_t\sim S(\cdot)}
\left[
\sum_a
\frac{v(a\mid h_t)^2}{S(a\mid h_t)}
\right]
+O(\eta^3),
\end{align*}
where the first-order KL term vanishes because $\sum_a v(a\mid h)=0$.
Therefore, at each history $h$, the leading-order direction maximizes:
\[
\sum_a v(a\mid h)A^S(h,a)
+
\lambda\sum_a\frac{T(a\mid h)}{S(a\mid h)}v(a\mid h)
-
\frac12\sum_a\frac{v(a\mid h)^2}{S(a\mid h)}
\]
subject to $\sum_a v(a\mid h)=0$. Introducing a multiplier $\nu$, the first-order condition gives:
\[
A^S(h,a)
+
\lambda\frac{T(a\mid h)}{S(a\mid h)}
-
\frac{v(a\mid h)}{S(a\mid h)}
-\nu
=0.
\]
Summing over $a$ and using $\E_{a\sim S}[A^S(h,a)]=0$ and $\sum_aS(a\mid h)=\sum_aT(a\mid h)=1$ gives $\nu=\lambda$. Hence
\begin{equation}
\label{eq:proof-hybrid-direction}
v(a\mid h)
=
S(a\mid h)A^S(h,a)
+
\lambda\bigl(T(a\mid h)-S(a\mid h)\bigr).
\end{equation}
The first term is the reward-improving direction, while the second moves the student toward the teacher.

\paragraph{Effect on the student's reward.}
Substituting \Cref{eq:proof-hybrid-direction} into the first-order reward expansion gives
\begin{align*}
\mathcal R(S_{\mathrm{mix}}^+)-\mathcal R(S)
&=
\eta
\sum_t
\E_{h_t\sim S(\cdot)}
\left[
\sum_a S(a\mid h_t)A^S(h_t,a)^2
+
\lambda\sum_a\bigl(T(a\mid h_t)-S(a\mid h_t)\bigr)A^S(h_t,a)
\right]
+O(\eta^2)\\
&=
\eta
\sum_t
\E_{h_t\sim S(\cdot)}
\left[
\E_{a\sim S}[A^S(h_t,a)^2]
+
\lambda\E_{a\sim T}[A^S(h_t,a)]
\right]
+O(\eta^2),
\end{align*}
where the second equality again uses $\E_{a\sim S}[A^S(h,a)]=0$. Thus,
\begin{equation}
\label{eq:proof-hybrid-reward}
\mathcal R(S_{\mathrm{mix}}^+)-\mathcal R(S)
=
\eta
\sum_t
\E_{h_t\sim S(\cdot)}
\left[
\E_{a\sim S}[A^S(h_t,a)^2]
+
\lambda\E_{a\sim T}[A^S(h_t,a)]
\right]
+O(\eta^2).
\end{equation}
The RL contribution is always nonnegative, but the FKL contribution can be negative.

\paragraph{A two-step counterexample.}
Consider a two-step task where the first action chooses a "safe" or "difficult" route and the second action determines success or failure. Let $0<\varepsilon<1/3$. Define the student by:
\[
S(\mathrm{difficult})=\varepsilon,
\qquad
S(\mathrm{success}\mid\mathrm{difficult})=\varepsilon,
\qquad
S(\mathrm{success}\mid\mathrm{safe})=1-\varepsilon
\]
and the teacher by:
\[
T(\mathrm{difficult})=1-\varepsilon,
\qquad
T(\mathrm{success}\mid\mathrm{difficult})
=
T(\mathrm{success}\mid\mathrm{safe})
=
1-\varepsilon
\]
The expected rewards for the student and the teacher are: 
\[
\mathcal R(S)
=
\varepsilon^2+(1-\varepsilon)^2
=
1-2\varepsilon+2\varepsilon^2,
\qquad
\mathcal R(T)=1-\varepsilon,
\]
so
\[
\mathcal R(T)-\mathcal R(S)
=
\varepsilon(1-2\varepsilon)>0.
\]
Thus, the teacher is strictly better.
At the initial history $h_0$:
\[
Q^S(h_0,\mathrm{difficult})=\varepsilon,
\qquad
Q^S(h_0,\mathrm{safe})=1-\varepsilon,
\]
and therefore
\[
A^S(h_0,\mathrm{difficult})
=
-(1-\varepsilon)(1-2\varepsilon),
\qquad
A^S(h_0,\mathrm{safe})
=
\varepsilon(1-2\varepsilon).
\]
Hence
\[
\E_{a\sim T}[A^S(h_0,a)]
=
-(1-2\varepsilon)^2.
\]

At the difficult-route history,
\[
\E_{a\sim T}[A^S(h_{\mathrm{difficult}},a)]
=
1-2\varepsilon,
\]
and this history is reached by $S$ with probability $\varepsilon$. At the safe-route history, teacher and student have the same success probability, so the expected teacher advantage is zero. Therefore:
\begin{equation}
\label{eq:proof-example-teacher-advantage}
\sum_t
\E_{h_t\sim S(\cdot)}
\E_{a\sim T}[A^S(h_t,a)]
=
-(1-2\varepsilon)^2+\varepsilon(1-2\varepsilon)
=
(1-2\varepsilon)(3\varepsilon-1).
\end{equation}

For the squared-advantage term, the initial history contributes
\[
\E_{a\sim S}[A^S(h_0,a)^2]
=
\varepsilon(1-\varepsilon)(1-2\varepsilon)^2.
\]
The two second-step histories together contribute $\varepsilon(1-\varepsilon)$. Thus,
\begin{equation}
\label{eq:proof-example-squared-advantage}
\sum_t
\E_{h_t\sim S(\cdot)}
\E_{a\sim S}[A^S(h_t,a)^2]
=
\varepsilon(1-\varepsilon)
\left[
1+(1-2\varepsilon)^2
\right].
\end{equation}

Substituting \Cref{eq:proof-example-teacher-advantage,eq:proof-example-squared-advantage} into \Cref{eq:proof-hybrid-reward}, the coefficient of $\eta$ is
\[
\varepsilon(1-\varepsilon)
\left[
1+(1-2\varepsilon)^2
\right]
+
\lambda(1-2\varepsilon)(3\varepsilon-1).
\]
As $\varepsilon\to0$, this converges to $-\lambda<0$. Therefore, for every fixed $\lambda>0$, we can choose a sufficiently small $\varepsilon>0$ such that the linear reward change is negative. Holding this task fixed, the negative $O(\eta)$ term dominates the $O(\eta^2)$ remainder for sufficiently small $\eta>0$, giving
\[
\mathcal R(S_{\mathrm{mix}}^+)<\mathcal R(S).
\]
\end{proof}

\end{document}